\def\eqref#1{equation~\ref{#1}}
\def\1{\bm{1}}
\def\va{{\bm{a}}}
\def\vb{{\bm{b}}}
\def\vh{{\bm{h}}}
\def\vs{{\bm{s}}}
\def\vx{{\bm{x}}}
\def\vy{{\bm{y}}}
\def\vz{{\bm{z}}}
\def\mA{{\bm{A}}}
\def\mF{{\bm{F}}}
\def\mG{{\bm{G}}}
\DeclareMathAlphabet{\mathsfit}{\encodingdefault}{\sfdefault}{m}{sl}
\SetMathAlphabet{\mathsfit}{bold}{\encodingdefault}{\sfdefault}{bx}{n}
\def\gA{{\mathcal{A}}}
\def\gF{{\mathcal{F}}}
\def\gO{{\mathcal{O}}}
\def\gR{{\mathcal{R}}}
\def\gS{{\mathcal{S}}}
\def\gU{{\mathcal{U}}}
\newcommand{\E}{\mathbb{E}}
\newcommand{\R}{\mathbb{R}}
\DeclareMathOperator*{\argmin}{arg\,min}
\newcommand*{\norm}[1]{\left\|#1\right\|}
\newcommand*{\prob}[1]{\mathbb{P}}
\def\defeq{:=}
\def\ker{\text{ker}}
\def\zero{\mathbf{0}}
\newcommand*{\paran}[1]{\left(#1\right)}
\newtheorem{theorem}{Theorem}[section]
\theoremstyle{definition}
\newtheorem{definition}[theorem]{Definition}
\theoremstyle{remark}
\newtheorem{remark}[theorem]{Remark}
\theoremstyle{plain}
\newcommand{\myparagraph}[1]{
\vspace{0.1cm}\noindent
\textbf{#1.}
}
\icmltitlerunning{Reinforcement Learning of Adaptive Acquisition Policies for Inverse Problems}
\begin{document}

\twocolumn[
\icmltitle{Reinforcement Learning of Adaptive Acquisition Policies for Inverse Problems}



\icmlsetsymbol{equal}{*}


\begin{icmlauthorlist}
\icmlauthor{Gianluigi Silvestri}{comp1,yyy,zzz}
\icmlauthor{Fabio Valerio Massoli}{comp}
\icmlauthor{Tribhuvanesh Orekondy}{comp3}
\icmlauthor{Afshin Abdi}{comp2}
\icmlauthor{Arash Behboodi}{comp}
\end{icmlauthorlist}

\icmlaffiliation{yyy}{Donders Institute for Brain, Cognition and Behaviour, Radboud University;}
\icmlaffiliation{comp1}{OnePlanet Research Center, imec-the Netherlands;}
\icmlaffiliation{zzz}{Work done during internship program;}
\icmlaffiliation{comp}{Qualcomm AI Research (Qualcomm AI Research is an initiative of Qualcomm Technologies, Inc.);}
\icmlaffiliation{comp2}{Qualcomm Technologies, Inc.;}
\icmlaffiliation{comp3}{Qualcomm Wireless GmbH;}

\icmlcorrespondingauthor{Gianluigi Silvestri}{gianluigi.silvestri@imec.nl}
\icmlkeywords{Machine Learning, ICML}

\vskip 0.3in
]



\printAffiliationsAndNotice{} 

\begin{abstract}
A promising way to mitigate the expensive process of obtaining a high-dimensional signal is to acquire a limited number of low-dimensional measurements and solve an under-determined inverse problem by utilizing the structural prior about the signal. In this paper, we focus on adaptive acquisition schemes to save further the number of measurements. 
To this end, we propose a reinforcement learning-based approach that sequentially collects measurements to better recover the underlying signal by acquiring fewer measurements. Our approach applies to general inverse problems with continuous action spaces and jointly learns the recovery algorithm. Using insights obtained from theoretical analysis, we also provide a probabilistic design for our methods using variational formulation. 
We evaluate our approach on multiple datasets and with two measurement spaces (Gaussian, Radon).
Our results confirm the benefits of adaptive strategies in low-acquisition horizon settings.
\end{abstract}

\section{Introduction}
Compressed sensing aims at solving underdetermined linear inverse problems by leveraging the structure of the underlying signal of interest \citep{tibshirani1996regression,candes_stable_2006,donoho_compressed_2006}. Although the initial theory was focused on sparsity, other notions of structure have been considered too, for instance, in \citet{tang_compressed_2013}. Compressed sensing theory provides a hard constraint on the number of required measurements for signal recovery. Reducing the number of measurements is crucial when they are costly, for example, due to resource constraints or patient comfort in imaging tasks, and there has been a line of works exploring adaptive measurements to achieve this goal (see related works section). These works have been mainly focused on medical imaging applications like MRI where the space of measurements is a discrete space (see for example \citet{bakker2020experimental} and references therein). \\
On the other hand, certain works in compressed sensing theory suggest adaptive methods are not helpful in noiseless settings when the worst-case error is considered \citep{cohen_compressed_2009,foucart_gelfand_2010, novak_optimal_1995}. At first look, this seems to be in conflict with the experimental gains shown by the adaptive methods. It is important to understand the roots of this discrepancy and see if any guidelines can be obtained by revising the existing theoretical results. \\
In this work, we pursue two goals. First, we aim to design a generic framework to solve adaptive recovery problems by simultaneously learning a measurement policy network and a recovery algorithm, while working with either continuous or discrete measurement spaces.
Moreover, we aim to explain the apparent discrepancy between certain theoretical results and the experimental works and derive design guidelines. We stress that our method only needs to learn the measurement \textit{actions} and the recovery network. Thereby, our model can potentially be agnostic to the exact measurement model, making it a suitable candidate for non-linear settings. 

\myparagraph{Contributions} \textbf{1)} We introduce a framework for adaptive sensing with arbitrary sensing operations that can naturally work in both continuous and discrete spaces. 
\textbf{2)} We propose a novel training procedure for end-to-end learning of both reconstruction and acquisition strategies, which combines supervised learning of the signal to recover and reinforcement learning on latent space for optimal measurement selection.
\textbf{3)} We add a probabilistic formulation of our model, which can be trained with a variational lower bound to add structure and probabilistic interpretation to the latent space.

\section{Methodology}
\label{sec:method}

\subsection{Compressed Sensing}

\myparagraph{Problem}
As in compressed sensing, we consider underdetermined inverse problems, where the goal is to recover a high-dimensional \textit{signal} $\vx \in \R^N$ through low-dimensional \textit{observations} $\vy \in \R^T (T \ll N)$. 
Each observation $y_t = F(\va_t, \vx)$ is acquired through a projection operation parameterized as $\va_t$.
In a linear measurement setting, this amounts to:
\begin{align}
    \vy = F(\mA, \vx) = \mA \vx 
\end{align}
where $\mA \in R^{T \times N}$ is referred to as the `sensing matrix', and is defined as:
\begin{align}
    \mA = \left [
\begin{array}{c}
     \va_1 \\
     \va_2 \\
     \vdots \\
     \va_T
\end{array}
\right ]
\end{align}
We will explain specific types of $\mA$, $\vx$, $y$, and $F$ used in this work in section \ref{sec:datasets}.


\myparagraph{Goal 1: Reconstruction}
The primary goal in the compressed sensing setup is to recover the signal $\vx$.
Note that this reconstruction is generally intractable as it amounts to solving an under-determined system of equations.
However, the reconstruction becomes possible if we assume a prior about the signal structure, for example, the assumption of sparsity or lying on the data manifold modeled by a deep generative model \citep{Bora2017-as}.


\myparagraph{Goal 2: Reducing Measurements}
An auxiliary goal in compressed sensing is to reduce the number of measurements (i.e., the number of rows in $\mA$) with minimal impact on the recovery process.
This is especially critical when the measurement process is an expensive, time-consuming process, such as with medical MRI or CT scans.
Reducing the measurements can be achieved by designing a better measurement matrix $\mA$.

The above two goals present an inherent trade-off: we can obtain better reconstructions at the price of collecting more expensive, time-consuming measurements.
In the next section, we present our technique that jointly addresses this trade-off.

\subsection{Adaptive Compressed Sensing}

\begin{figure}[!ht]
\centering
\includegraphics[width=\linewidth]{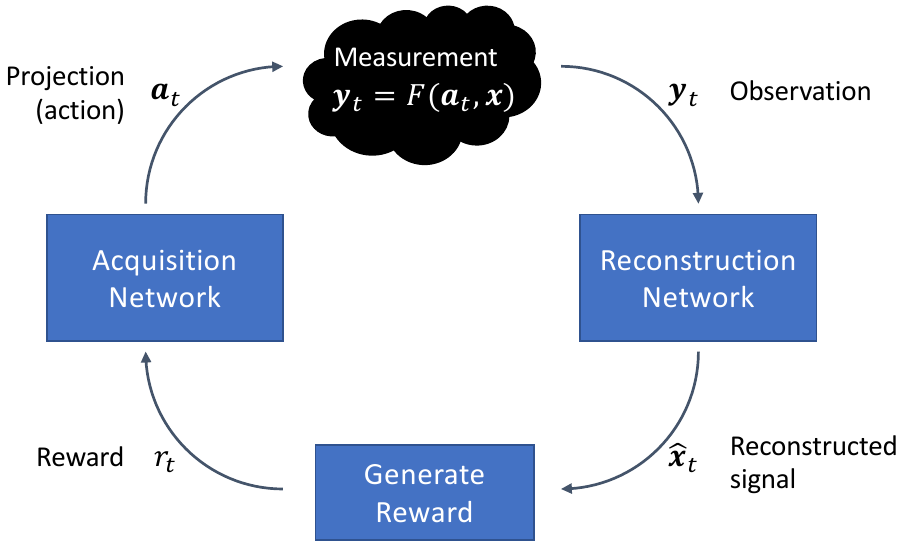}
\caption{Schematic representation of our method. A reconstruction network is trained to reconstruct the signal $x$ given a sequence of actions $a_{1:t}$ and corresponding observations $y_{1:t}$. The role of the acquisition network is to select the next action $a_{t+1}$ based on the reconstruction quality of the signal $\hat{x}_t$. The improvement in reconstruction quality between consecutive steps $t$ and $t-1$ is used as reward $r_t$ to train the acquisition network with Reinforcement Learning. After a new action $a_{t+1}$ is selected, a new observation $y_{t+1}$ is collected based on a function $F(a_{t+1},x)$, specific to the inverse problem at hand. Note that in real-world scenarios, there might be no knowledge of $F$ and $x$, and the observation $y$ can be obtained only through measurements $a$ of the environment.}
\label{fig:overview}
\end{figure}

\myparagraph{Framework}
Central to our approach towards minimizing the number of measurements is exploiting adaptivity:
to sequentially construct the sensing matrix $\mA$ (composed of $T$ projection vectors $\va_t$) to enable better recovery of the underlying signal $\vx$.
Similar to \citet{bakker2020experimental}, we approach this sequential decision-making problem within a reinforcement learning framework. In the following, we use the terms active and adaptive strategies interchangeably, meaning strategies that select custom measurements depending on the specific input or signal. Other works, such as \citet{bakker2022learning}, make a distinction between active and adaptive, with adaptive meaning a custom set of measurements per input collected all at once, and active meaning that several rounds of measurements and observations are done, potentially also more than one measurement at the time like in \citet{yin2021end}.\\
\myparagraph{Adaptive Acquisition as a POMDP}
We define the adaptive acquisition as a Partially Observable Markov Decision Process (POMDP). A POMDP is defined by a tuple ($\gS$, $\gA$, $\gO$, $\gF$, $\gU$, $\gR)$.
Here, $\gS$ is the state space, 
$\gA$ the action space, 
$\gO$ the observation space,
$\gF: (\gS \times \gA) \rightarrow \gS$ is the transition distribution, 
$\gU: (\gS \times \gA) \rightarrow \gO$ is the observation distribution, and
$\gR: (\gS \times \gA) \rightarrow \R$ is the reward function for a state-action pair.
Over the next paragraphs, we take a closer look at each of these aspects of the problem of adaptive compressed sensing.\\
\myparagraph{Stationary State and Transition Distribution}
We consider the signal $\vx$, which needs to be recovered as the stationary state of the system.
The agent cannot directly observe the state but rather senses and obtains low-dimensional observations $y_t$.
As a result of the stationary state, the transition distribution remains fixed:
\begin{align}
    \mathcal{F}(\vx_{t+1}\mid \vx_t, \va_t) = \delta(\vx_{t+1}-\vx_t),\quad \vx_0=\vx
\end{align} \\
which would mean that $\vx_{t+1} \sim \delta(\vx_{t+1}-\vx)$, where $\delta$ is the Dirac distribution.

\myparagraph{Actions}
Each action $\va_t$ corresponds to a particular projection operation, i.e., a row of the sensing matrix $\mA$.
Depending on the type of measurement used (more details in \ref{sec:datasets}), the actions can take different forms but generally are $N$-dim real-valued vectors.\\
\myparagraph{Observations and Observation Distribution}
We denote as \textit{observation} $y_t$ the information received by the agent after performing a measurement $y_t = F(\va_t, \vx)$. 
This observation is drawn from observation distribution $y_t \sim \gU(y_{t} | \vx, \va_t)$.
Since the paper primarily studies reconstructing a time-invariant signal in a noiseless setting, the observation corresponds to a deterministic measurement $y_t = F(\va_t, \vx)$.\\
\myparagraph{Reward}
After taking an action, the agent receives a reward according to the distribution $r_{t} \sim \mathcal{R}(r_{t}\mid \vx, \va_t)$. 
We define the reward at each time step as the improvement in reconstruction quality $d(.)$ between consecutive time steps: $r_{t} = d(\hat{\vx}_{t}, \vx) - d(\hat{\vx}_{t-1}, \vx)$ following a metric $d$. 
In our experiments, we use Structural Similarity Index Measure (SSIM) \citep{wang2004image} as $d(.)$. 
For $t=1$, the reward is simply $d(\hat{\vx}_{1}, \vx)$.




\subsection{Architectural components}
\label{sec:method_architecture}

Central to our approach (see Figure~\ref{fig:overview}) are two models:
(a) a \textit{reconstruction} model that recovers the signal from low-dimension observations; and
(b) an \textit{acquisition} model (the agent) that adaptively constructs the sensing matrix for measurements.
We now look into these models in-depth.\\
\myparagraph{Reconstruction Model}
The goal of the reconstruction model (see \autoref{fig:network_arch}; in blue) is to recover a signal $\hat{\vx}_t$ that faithfully represents the signal $\vx$ under measurement.
Such a recovery is challenging, given that the system is under-determined, i.e., we recover using $T$ measurements $\vy_{1:T}$ while the signal is $N$-dimensional ($N \gg T$).
To tackle this challenge, inspired by \citet{bakker2020experimental}, we use an autoencoder-style model to perform reconstruction but with one key difference:
the encoder is a Gated Recurrent Unit (GRU) \citep{cho2014properties} designed to deal with variably-sized sequences.
The autoencoder is defined by a (recurrent) encoder $g_\phi$ and a decoder $f_\phi$, both parameterized by $\phi$ (Fig. \ref{fig:network_arch}). 
At each time-step $t$, the encoder predicts the latent features $z_t$ from the trajectory of actions $\va_{1:t}$ and observations $\vy_{1:t}$:
    $\vz_t = g_\phi(\va_t, \vy_t, \vh_t),$
where $\vh_t$ is the hidden state of the GRU and summarizes the past inputs $a_{1:t-1}$ and $y_{1:t-1}$.
The decoder receives $\vz_t$ as input and outputs a reconstruction $\hat{\vx}_t$:
    $\hat{\vx}_t = f_\phi(\vz_t).$

The model is trained to minimize a loss $\mathcal{L}$ defined as the sum of the Mean Squared Error (MSE) between $x$ and $\hat{x}_t$ for each $t$:
\begin{align}
    \mathcal{L} = \frac{1}{T}\sum_{t=1}^{T} (\vx - \hat{\vx}_t)^2
\end{align}
As opposed to solely considering the MSE with the final reconstruction $\hat{\vx}_{T}$, our formulation presents certain benefits:
(a) it takes into account the scenario in which $\vx$ changes over time (e.g., when taking a measurement affects the true signal);
(b) it forces the model to have good reconstruction quality at each time step; and 
(c) makes the loss comparable to the variational evidence lower bound optimized in section \ref{sec:experiments_vae}, which includes the sum of likelihoods at each time step of the trajectory. \\

\begin{figure}[!ht]
\centering
\includegraphics[width=\linewidth]{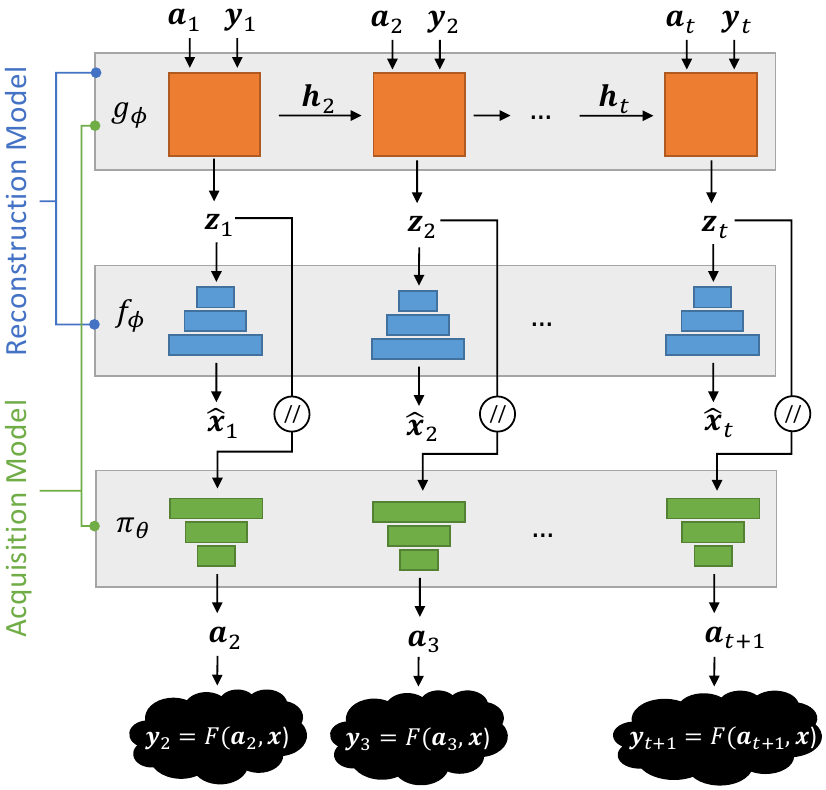}
\caption{Network architecture used in our experiments. A recurrent encoder (orange) maps the action $\va_t$ and observation $\vy_t$ at time step $t$ to a latent representation $\vz_t$, using the hidden state $\vh_t$ to summarize the past actions and observations $\va_{1:t-1}$ and $\vy_{1:t-1}$. A convolutional decoder is then used to reconstruct the signal $\hat{\vx}_t$ from $\vz_t$. The acquisition network is used to select actions $\va_{t+1}$ from the latent representation $\vz_t$. The acquisition network is only used for the adaptive acquisition strategies, while the random baseline samples actions at random from a predefined probability distribution. Note how the encoder only receives gradients from the decoder $f_\phi$, and gradients from the acquisition network are never backpropagated through the encoder.
}
\label{fig:network_arch}
\end{figure}

\myparagraph{Acquisition Model (Policy Network)}
The goal of the acquisition model is to predict a projection (the action) $\va_t$, such that the resulting observation $\vy_t = F(\va_t, \vx)$ is highly informative towards reconstructing the signal $\hat{\vx}_t$.
To achieve this goal, we design a policy $\pi_\theta$ (see \autoref{fig:network_arch}; in green) to select the next action based on the history of measurements, observations, and reconstruction qualities.
Therefore, we condition the policy network on the encoder's latent representation $\vz_{t-1} = g_\phi(\va_{t-1}, \vy_{t-1}, \vh_{t-1})$, which is in parallel also used to aid reconstruction as we saw earlier. 
Following the manifold hypothesis \citep{pope2021intrinsic, fefferman2016testing, bengio2013representation}, we assume that $\vz_{t-1}$ contains all the relevant information about the history of acquisitions and observations in a compressed space.
Therefore, the policy learns to select the next action $\va_{t+1}$ conditioned only on the latest latent representation:
\begin{align}
    \va_{t} \sim \pi_\theta(\va_{t} \mid \vz_{t-1})
\end{align}
This differs from what is commonly done in Adaptive Sensing for MRI, where the policy is conditioned on the latest reconstruction $\hat{x}_t$ \citep{bakker2020experimental}.
Following this acquisition, we reconstruct the signal based on the latest observation to assign an instantaneous reward $r_{t}$.
We train the acquisition model with Vanilla Policy Gradient (VPG) \citep{sutton2018reinforcement}, specifically using \textit{reward-to-go} $\hat{R}_t = \sum_{k=t}^T \gamma^{k-t} r_k$ with discount factor $\gamma$, advantage estimation $A^\pi$ \citep{schulman2015high}, and a neural network $V_{\psi}(z_t)$ as baseline for variance reduction. We estimate the policy gradient for a batch of $B$ images as:
\begin{align}
    \hat{g}_B = \frac{1}{B}\sum_B \sum_{t=1}^T \nabla_{\theta} \log \pi_{\theta}(\va_t \mid \vs_t) \hat{A}_t
\end{align}
with $\theta$ the parameters of the policy network. The baseline (\autoref{impl_details}) is trained by mean squared error as:
\begin{align}
    \psi_{k+1} = \argmin_{\psi}\frac{1}{BT}\sum_B \sum_{t=1}^T\left(V_\psi (\vz_t) - \hat{R}_t\right)^2
\end{align}

\subsection{Training strategy and Baselines}
\label{sec:method_training}

In this section, we walk through our end-to-end approach where the reconstruction and adaptive acquisition model are jointly trained.
We begin discussing two baseline approaches and conclude with the end-to-end approach.\\
\myparagraph{Baseline 1: Random Acquisition and Reconstruction (\texttt{AE-R})}
In this baseline, we consider a random acquisition policy: each action taken is drawn agnostic to past actions, observations, and rewards.
Note that in this case, there is no inherent concept of temporal sequences, as all the measurements are done in parallel and independently of each other, therefore removing the need for a recurrent encoder. 
However, to make the baseline comparable with other acquisition strategies with sequential acquisition, we keep the same architectural components described in the previous section.\\
\myparagraph{Baseline 2: Adaptive Acquisition with pre-trained Reconstruction (\texttt{AE-P})}
The second baseline consists of a pre-trained reconstruction model, trained with the same procedure used for the \texttt{AE-R} baseline. 
This strategy is similar to that proposed in \citet{bakker2020experimental}.
The policy selects a first measurement $\va_1$, which is used to obtain a first observation $y_1$. 
Then, for $t = 1,\dots,T$, we compute $\vz_t = g_\phi(\va_t, \vy_t, \vh_t)$ ($\vh_1$ is filled with zeros) and $\hat{\vx}_t=f_\phi(\vz_t)$. 
The policy selects the next action $\va_{t+1}$ based on $\vz_t$, and the procedure is repeated until the entire trajectory is collected. 
Finally, the policy is updated with Policy Gradient (see section \ref{sec:method_architecture}), while the reconstruction model is kept fixed.
The policy network is always trained on the same data used to pre-train the reconstruction network.\\
\myparagraph{End-to-End Adaptive Acquisition and Reconstruction (\texttt{AE-E2E})}
We proposed end-to-end training of both reconstruction and acquisition models. 
Unlike the previous \texttt{AE-P} strategy, the reconstruction model is now initialized with random parameters and is trained after each trajectory is collected by the policy. 
Inspired by \citet{zintgraf2019varibad}, we do not backpropagate gradients from the policy to the encoder, to avoid instabilities and the need for additional hyperparameters for the multi-task learning loss.

\subsection{Bayesian reasoning and the importance of belief states}
\label{sec:method_bayes} 

In this section, we extend our approach with a variational formulation \citep{kingma2013auto} to reap two additional benefits:
(a) generalizability, as simple auto-encoders learning an unstructured latent space, typically leads to overfitting \citep{chung2015recurrent}; and 
(b) uncertainty quantification, which provides a reasonable signal to guide the policy network.
Intuitively, we expect the policy to select exploratory actions in high uncertainty regimes and more exploitative actions as uncertainty reduces. 
Consequently, we introduce a variational formulation for our model, inspired by \citet{zintgraf2019varibad}, to learn a belief distribution over the latent space.
The encoder outputs the parameters of such distribution, in our experiments, a Multivariate Gaussian with diagonal covariance like in standard Variational Autoencoders \citep{kingma2013auto}:
\begin{align}
    \vb_t = g_\phi(\va_t, \vy_t, \vh_t) = (\bm{\bar{\mu}}_t, \bm{\bar{\sigma}}_t)
\end{align}
The policy selects the following action based on the belief:
\begin{align}
    \va_{t+1} = \pi_\theta(\va_{t+1}\mid \vb_t),
\end{align} while the decoder reconstructs the image from a sample from the belief distribution:
\begin{align}
    \hat{\vx}_t = f_\phi(\vz_t), \ \vz_t \sim \mathcal{N}(\bm{\bar{\mu}}_t, \bm{\bar{\sigma}}_t I)
\end{align}
We train the model to maximize the probability of the signal $\vx$ given the sequence of acquisitions and observations:
\begin{align}
    \log p(\vx\mid \va_{1:T}, \vy_{1:T}) = \log \int p(\vx, \vz_{1:T}\mid \va_{1:T}, \vy_{1:T}) d \vz_{1:T}
\end{align}
As this probability is intractable, we maximize the ELBO instead:
\begin{align}
\begin{split}
    \sum_{t=1}^T \E_{z_{1:T-1}} [\E_{z_T}[\log p(\vx_t = \vx\mid \vz_t)]  - \\ 
    D_{KL}(q(\vz_t \mid \va_{t}, \vy_t, \vh_t) \parallel p(\vz_t \mid \vz_{t-1}))]
\end{split}
\end{align}
The lower bound derivation can be found in Appendix \ref{app: lower_bound}. The training procedure remains the same as for the deterministic end-to-end case.
The prior at $t=1$ is $p(\vz_1) = \mathcal{N}(0, I)$, while at each time step $t>1$ it is the posterior $q$ at time $t-1$.

We would like to end this section with a remark on the theoretical support for the gain of adaptive acquisition. In compressed sensing, there are certain results that state there is no gain in adaptive sensing (see for example \cite{foucart_gelfand_2010,foucart_mathematical_2013}). We analyze the assumptions behind the theory in App. \ref{app:gelfand_bounds}. To summarize,
first, the adaptive sensing does not improve the worst-case error, and  second, the theory does not apply to a probabilistic adaptive scheme. In this work, we have focused on average-case error improvements, and used a probabilistic formulation of adaptive sensing. See App. \ref{app:gelfand_bounds} for detailed discussions.

\section{Related Work} \label{sec:related_works}
\subsection{Adaptive acquisition with Reinforcement Learning}

The question of adaptive compressed sensing has been approached from a  theoretical perspective in \citet{cohen_compressed_2009,foucart_gelfand_2010} (see \citet{foucart_mathematical_2013} for a concise and detailed overview of arguments based on Gelfand width). There are other works discussing various methods and theoretical analysis for the adaptive sensing \citep{malloy_near-optimal_2014,castro_adaptive_2014,castro_adaptive_2015,davenport_constrained_2016,braun_info-greedy_2015}. These works consider in general noisy setting, while we are focused on noiseless setting here. Their approach is classical and not data-driven.
In this work, we focus on Gelfand width-based analysis and review the subtleties of this argument in App. \ref{app:gelfand_bounds} and generalize some of the existing results. 
Complementing traditional compressed sensing approaches which focus on solving the \textit{reconstruction} problem, we consider adaptive sensing approaches \citep{bakker2020experimental, pineda2020active, jin2019self, bakker2022learning, Ramanarayanan_2023_ICCV} where \textit{acquiring} measurements are treated as a sequential decision-making problem.
While our approach is closely related to the latter line of adaptive sensing techniques, we present a more general approach that is suitable beyond MRI scenarios, e.g., capable of dealing with both continuous and discrete sensing matrices and observations.
More importantly, we tackle both reconstruction and acquisition problems simultaneously thereby contrasting the two-stage training procedure, in which a model is first trained for reconstruction and subsequently for acquisition.
While the authors in \citet{yin2021end} also tackle these problems simultaneously, their approach trains the whole system in a supervised manner with short acquisition horizons (4 steps in their experiments).
To the best of our knowledge, we are the first to introduce an end-to-end training of reconstruction and acquisition models for active sensing with reinforcement learning.
Our method also extends the work of \citet{zintgraf2019varibad}, which proposes a method to learn a belief distribution over unknown environments. 
While we use a similar architecture and training procedure, our work differs in its goal. The work \citet{zintgraf2019varibad}  performs meta-learning over the transition probability and reward function of unknown environments while we train our models to extract a belief distribution over the unknown state of a POMDP, similarly to \citet{igl2018deep, lee2020stochastic}.
In addition, the decoder in \citet{zintgraf2019varibad} is used to predict the future (and the past) of a Bayesian Augmented MDP, while for us, it is used as a reconstruction network, which is crucial to our end goal, namely to achieve a reconstructed signal faithful to the original signal $\vx$. 
\subsection{Latent Variable models and Deep RL}
The papers \citet{chung2015recurrent, gregor2018temporal} introduce variational methods for modeling temporal sequences, arguing that it is important to have a latent representation that can form a belief distribution representing a measure of uncertainty. We draw inspiration from these works to introduce a probabilistic interpretation of the latent space in our model.
Our architecture and training procedure are similar to the ones used in Reinforcement Learning on latent space methods \citep{khan2019latent, allshire2021laser, zhou2021plas}, Meta Reinforcement Learning \citep{wang2016learning, duan2016rl, zintgraf2019varibad} and Model-Based Reinforcement Learning \citep{hafner2019dream, hafner2020mastering}. In the context of POMDPs, works such as \citet{igl2018deep, hafner2019learning, lee2020stochastic} use Deep Variational Reinforcement Learning to learn a belief distribution over the hidden state, showing how explicitly modeling the uncertainty improves the performance of the agent. In our work, such belief distribution must be suitable for both the acquisition and reconstruction networks to select the follow-up action and reconstruct the unknown signal $\vx$.
\section{Experiments}\label{sec:experiments}

In this section, we describe the experiments performed to evaluate the proposed method. We first describe the datasets used, and then provide a comparison of the three methods introduced above, AE-R, AE-P, and AE-E2E. Finally, we verify the effectiveness of the variational formulation introduced in section \ref{sec:method_bayes}. Additional experiments can be found in Appendix \ref{app:exp_ppo}, where we analyze a phenomenon observed in \citet{bakker2020experimental}, where greedy policies ($\gamma = 0$) seem to perform better than discounted ones. Note that we focus on continuous action spaces. Moreover, we report additional ablation studies and complementary results for some of the datasets in Appendix \ref{app:more_figures} and \ref{sec:abl_stuides}. 
While our method can in principle deal with non-linear CS problems, we do not experiment with those in this work.

It is important to note that the use of adaptive acquisitions is particularly relevant when the acquisition budget is low. In principle, if one could take infinite measurements, then there wouldn't be any benefit in using an adaptive strategy over a random one. In the following, we provide an evaluation for different tasks at different measurement budgets, which can be used as a guideline to understand when the proposed adaptive strategy should be the preferred alternative over random acquisitions. In practice, the acquisition horizon $T$ is often driven by the specific domain, and the choice of acquisition strategy should be selected accordingly.

\subsection{Datasets and Sensing Operations}
\label{sec:datasets}
We test our algorithms for adaptive acquisition on two datasets: the handwritten digits dataset MNIST \citep{deng2012mnist} and the Low Dose CT Image and Projection Data\footnote{\url{https://www.aapm.org/grandchallenge/lowdosect/}} (MAYO) dataset \citep{moen2021low} (more details in App. \ref{app:mayo}). We use two different sensing operations $F$, which we name as \textit{Gaussian} and \textit{Radon}.
The Gaussian transformation, denoted as $G$, consists in a matrix multiplication between a random Gaussian matrix $\mA$ and the flattened image $\vx$: 
$\vy = G(\mA,\vx) = \mA\vx$.
The rows $\va_t$ of the sensing matrix have continuous entries $\in(-\infty, \infty)$, and same for $\vy$. For a vectorized image of size $N\times1$, the sensing matrix has size $T\times N$ (each row $\va_t$ is $1\times N$, with $T$ the total number of acquisitions, and $\vy$ is a vector of dimensions $1\times T$. In the adaptive acquisition scenario, the resulting $y_t=G(\va_t, \vx)$ is a scalar. Note that in compressed sensing, Gaussian measurements can achieve theoretical limits for non-adaptive sensing, and therefore represent the best non-adaptive sensing scheme.
The Radon transform is commonly used to reconstruct images from CT scans, see \citet{beatty2012radon} for a detailed description.
The sensing matrix $\mA$ corresponds to a vector $1\times T$ of angles in radians, with each $a_t$ being a scalar $\in [-\pi, \pi]$. Assuming that $x$ is an image of dimensions $h\times w$, then the $\vy$ resulting from $R(\mA,\vx)$ is a matrix of dimensions $h\times T$.
In adaptive acquisition, $y_t = R(a_t, \vx)$ is a vector of dimensions $h\times 1$.

\subsection{Implementation Details} \label{impl_details}
In these experiments, we use a simple GRU encoder and convolutional decoder. The policy is a convolutional architecture like the decoder for the Gaussian measurements, while it is a Multi-Layer Perceptron (MLP) for Radon measurements. 
The value network baseline is always an MLP with one hidden layer and ReLU activation function. The MLP takes as input the latent vector from the encoder and outputs a scalar representing the baseline value used for variance reduction.
All the policy models are trained with VPG and $\gamma=0.9$ unless otherwise specified.
For AE-R, the actions are randomly sampled from a spherical Gaussian $\va_t \sim \mathcal{N}(0,I)$ for Gaussian measurements and from a uniform $\va_t \sim \mathcal{U}(-\pi,\pi) $ for Radon. For the adaptive acquisition, for both AE-P and AE-E2E models, the policy parametrizes the mean and standard deviation of a Gaussian distribution for Gaussian measurements, and the mean and concentration of a Von Mises distribution for Radon measurements. During training, actions are sampled from such distributions, while at validation the mean parameter is used as an action. More details about hyperparameters can be found in appendix \ref{app: training_details}.

\subsection{Random vs Adaptive Acquisitions}
\label{sec:exp_ae}
We start by comparing the performance of AE-R, AE-P, and AE-E2E models introduced in section \ref{sec:method_training} on the MNIST dataset, for both Gaussian and Radon sensing operations. We experiment with different trajectory length, $T=20, 50, 100$ for Gaussian and $T=5, 10, 20$ for Radon. This choice is motivated by Radon measurements providing more information than Gaussians (a vector instead of a scalar).
The results are reported in table \ref{tab:random_vs_e2e_mainpaper}. We also report the reconstruction quality in SSIM after each acquisition step for models trained to optimize the whole trajectories, in Figure \ref{fig:mnist_20}.

\begin{table*}[ht]
    \caption{Results on the MNIST datset for both Gaussian and Radon measurements in SSIM (higer is better). The trajectory length of the experiment is reported on the second row. All results are computed on the whole test set for one run. We highlight in bold the best performance for each configuration.}
    \begin{center}
    
    \begin{tabularx}{0.83\linewidth}{lccccccccc}
    \toprule
        & \multicolumn{3}{c}{Gaussian} & & \multicolumn{3}{c}{Radon} \\
         \cmidrule(lr){2-4} \cmidrule(lr){6-8}\\
        Models & 20 & 50 & 100 & & 5 & 10 & 20\\
        \midrule
         \multirow{1}{*}{AE-R}  & $.49 \pm .02$ & $\bm{.64 \pm .02}$ & \bm{$.73 \pm .01$} & & $.58 \pm .02$ & $.69 \pm .01$ & $.77 \pm .01$ \\ \midrule
         \multirow{1}{*}{AE-P}  & $.49 \pm .02$ & $.42 \pm .02$ & $.40 \pm .02$ & & $.66 \pm .01$ & $.47 \pm .02$ & $.43 \pm .02$ \\
         \multirow{1}{*}{AE-E2E} & \bm{$.62 \pm .02$} & $.59 \pm .02$ & $.60 \pm .02$ & & \bm{$.83 \pm .01$} & \bm{$.84 \pm .01$} & \bm{$.85 \pm .01$} \\
    \bottomrule
    \end{tabularx}
    
    \label{tab:random_vs_e2e_mainpaper}
    \end{center}
\end{table*}

From the results, we can see how AE-E2E outperforms the other methods in most cases. However, while in Radon longer trajectories lead to higher performance, that is not the case for Gaussian measurements, where increasing the number of measurements leads to worse performance. Note, however, that this is the case only for the adaptive strategies, while AE-R keeps improving performance the more we add measurements and observations. We conjecture that this behavior could be related to the high dimensionality of the action space for the policy with Gaussian measurements. However, the adaptive E2E model still performs better for trajectories of length $\sim 40$, which is more than is used in papers such as \citet{bakker2020experimental, yin2021end}.
As we see that generally, the worst-case error improves as the mean performance improves, we drop this metric in the following sections. We also drop the comparison with AE-P, as in our experiments, it never outperforms AE-E2E.

\begin{figure}[ht]
\centering
\includegraphics[width=0.49\linewidth]{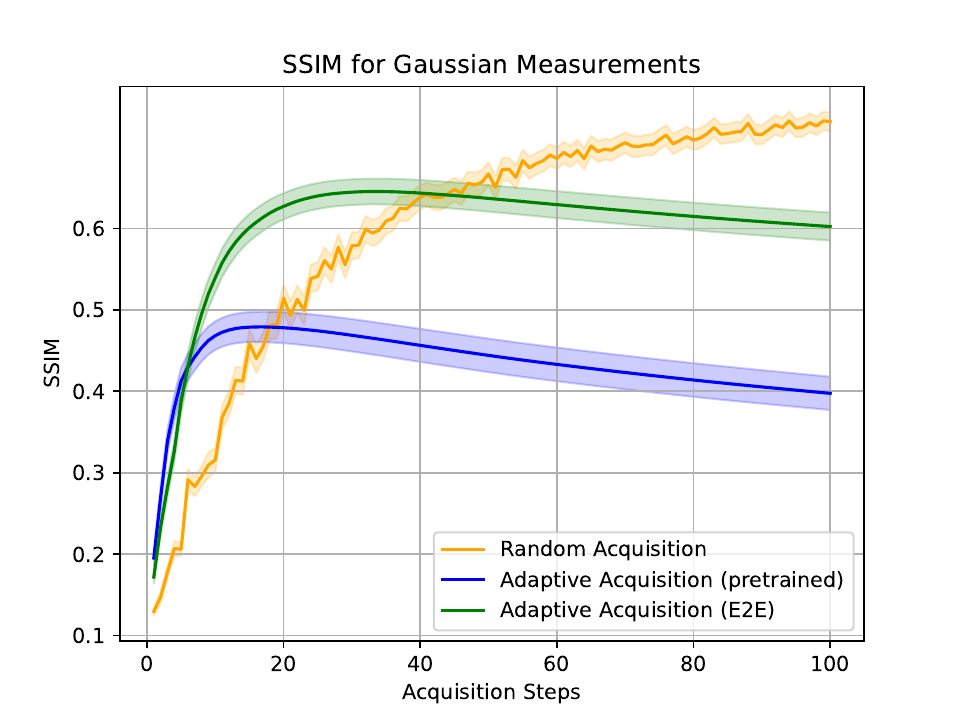}
\includegraphics[width=0.49\linewidth]{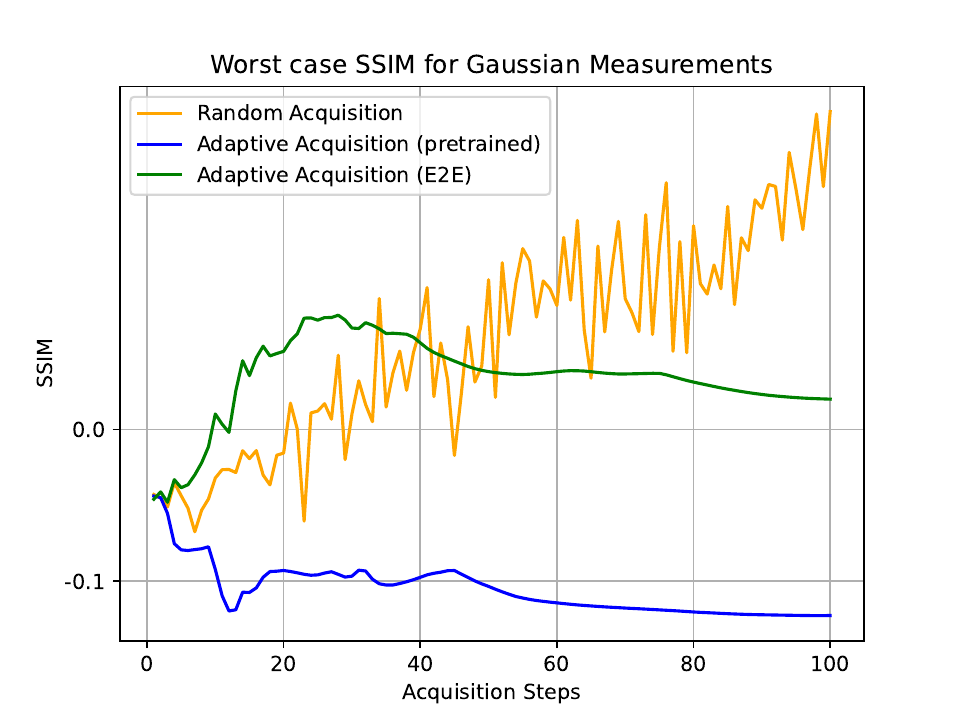}
\includegraphics[width=0.49\linewidth]{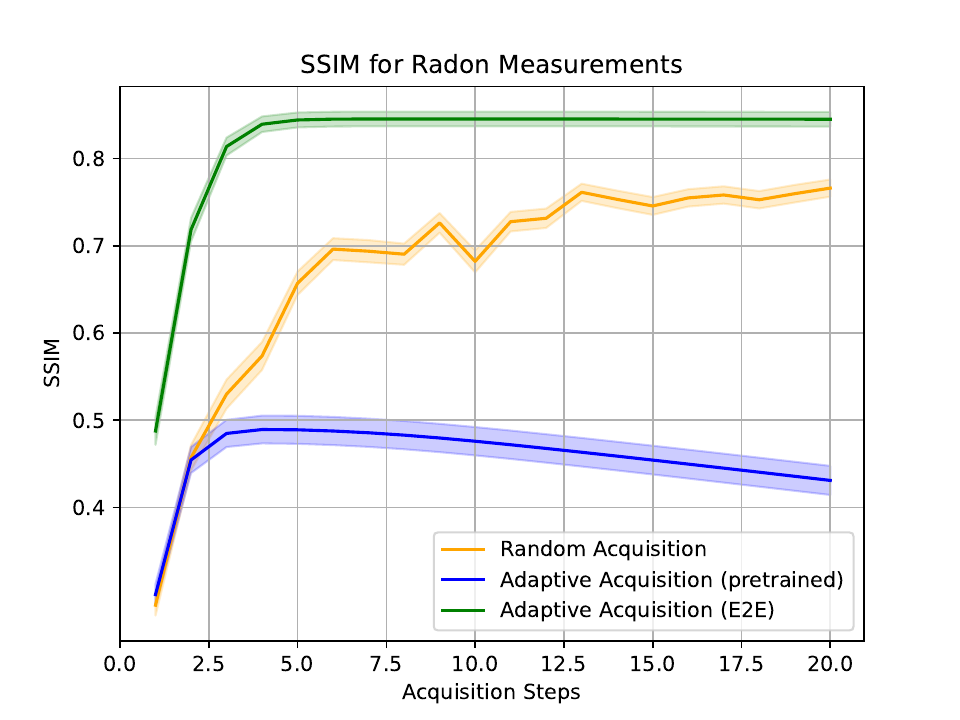}
\includegraphics[width=0.49\linewidth]{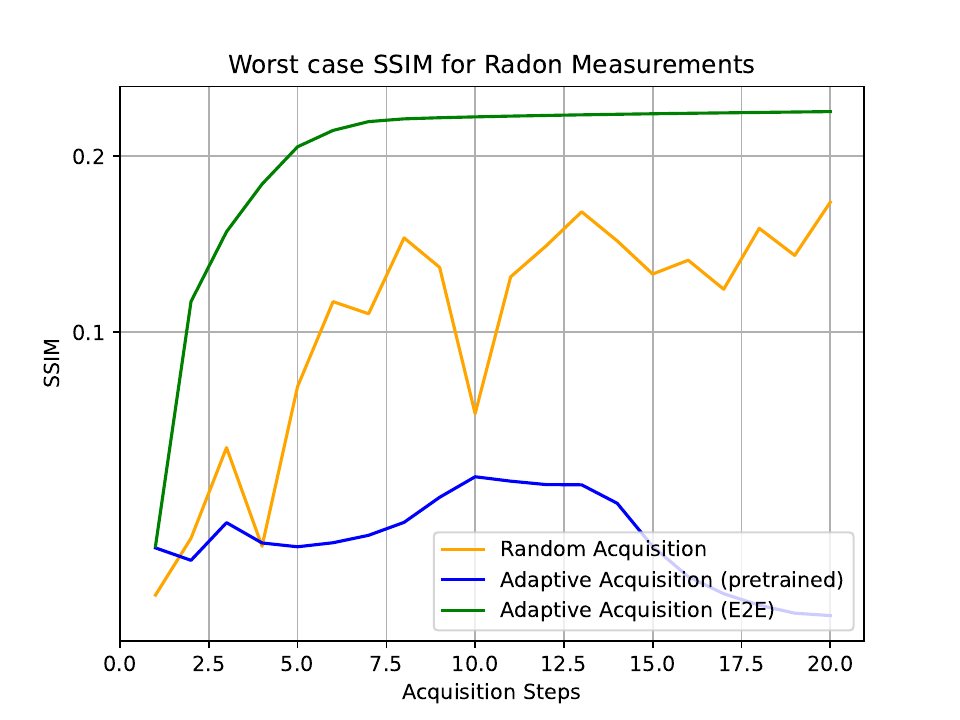}
\caption{Results on the MNIST test dataset with Gaussian measurements (top) and Radon measurements (bottom). We report the mean and standard error of the mean in SSIM (Left) and worst case error in SSIM (Right) for AE-R (yellow), AE-P (blue), and AE-E2E (green) for each acquisition step in the trajectory. Each model is trained on optimizing the whole trajectory length (100 for Gaussian, 20 for Radon).}
\label{fig:mnist_20}
\end{figure}

\subsection{VAEs and $\beta$-VAEs}
\label{sec:experiments_vae}
We perform the same experiments on MNIST done in section \ref{sec:exp_ae} but with the variational formulation introduced in section \ref{sec:method_bayes}. We define the models as VAE-R and VAE-E2E. We further define two variations of VAE-E2E, in which we introduce a weighting of the KL term with a scalar parameter $\beta$, as done in \citet{higgins2016beta}. Using $\beta$ in a $\beta$-VAE can control the disentanglement of the features in the latent representation. A high value ($\beta>1$) corresponds to high disentanglement, while a low value ($\beta<1$) usually results in a better reconstruction quality for the decoder. It is, therefore, crucial for us to tune $\beta$, as a disentangled latent space can be useful for the policy, but at the same time can harm the reconstruction performance for the decoder, which is the most important metric in our setting. We try three values of $\beta: 1, 0.1$, and $0.01$, without hyperparameter tuning. The results are reported in table \ref{tab:vae_mainpaper} and Figure \ref{fig:beta_vae_gauss}.

\begin{figure}[ht]
\centering
\includegraphics[width=0.49\linewidth]{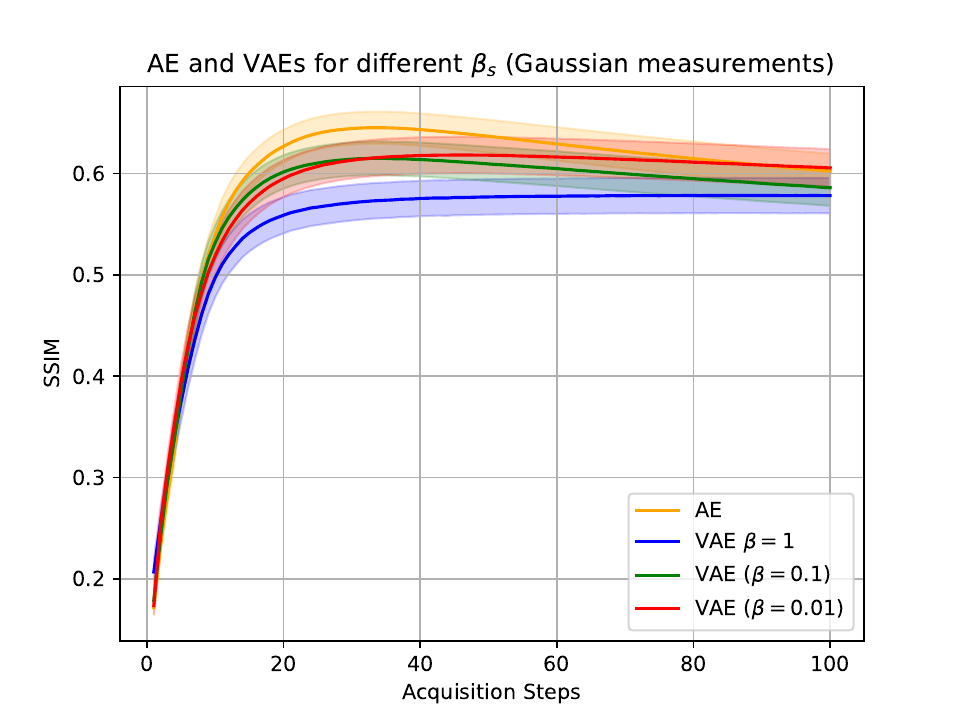}
\includegraphics[width=0.49\linewidth]{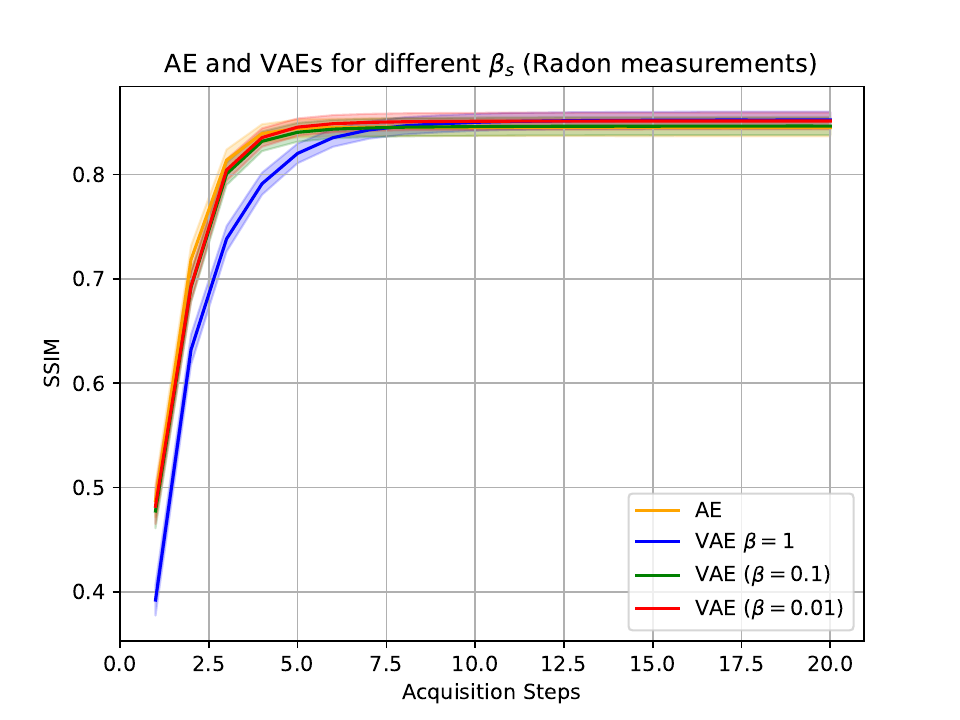}
\caption{Comparison of AE-E2E (yellow) and VAE-E2E for different $\beta$ ($\beta=1\rightarrow$ blue, $\beta=0.1\rightarrow$ green, $\beta=0.01\rightarrow$ red). We show the mean and standard error of the mean in SSIM for the MNIST test set, at different stages of the acquisition trajectory. We test on models trained on different acquisition horizons: 100 for Gaussian and 20 for Radon.}
\label{fig:beta_vae_gauss}
\end{figure}

VAE-E2E with $\beta=0.01$ outperforms the other VAE-E2E versions and achieves performance on-par or superior compared to the equivalent AE-E2E. Furthermore, by looking at Figure \ref{fig:beta_vae_gauss}, it is possible to see how the decrease in performance with long acquisition horizons seems to be ameliorated. This suggests that the disentanglement in the latent space with the probabilistic formulation if carefully tuned, can be useful for the policy in selecting optimal actions in long trajectories. However, VAE-R still outperforms the adaptive strategy on long acquisition trajectories. 
\begin{table*}[ht]
    \begin{center}
    \caption{MNIST SSIM (the higher the better) for different trajectory lengths (number on second row). SSIM is computed at the end of each acquisition trajectory, and we report mean and standard error of the mean. We highlight in bold the best performance for each configuration. Note that for 50 Gaussian measurements, the performances of VAE-E2E and VAE-R agree within the error.}
    
    \begin{tabularx}{0.92\linewidth}{llccccccccc}
    \toprule
        & & \multicolumn{3}{c}{Gaussian} & & \multicolumn{3}{c}{Radon} \\
         \cmidrule(lr){3-5} \cmidrule(lr){7-9}\\
        Models & $\beta$ & 20 & 50 & 100 & & 5 & 10 & 20\\
        \midrule
         \multirow{1}{*}{VAE-R} & \multirow{1}{*}{$1$}  & $.44 \pm .02$ & $\bm{.63 \pm 0.02}$ & $\bm{.70 \pm .01}$ & & $.55 \pm .02$ & $.68 \pm .013$ & $.74 \pm .01$ \\
         \midrule
         
        \multirow{3}{*}{VAE-E2E} & \multirow{1}{*}{$1$}  & $.57 \pm .02$ & $.60 \pm .02$ & $.58 \pm .02$ & & $.77 \pm .01$ & $.83 \pm .01$ & $.85 \pm .01$ \\
         & \multirow{1}{*}{$0.1$}   & $.59 \pm .02$ & $.58 \pm .02$ & $.59 \pm .02$ & & $.81 \pm .01$ & $.84 \pm .01$ & $.85 \pm .01$\\
         & \multirow{1}{*}{$0.01$}   & \bm{$.61 \pm .02$} & $.63 \pm .02$ & $.61 \pm .02$ & & \bm{$.82 \pm .01$} & \bm{$.85 \pm .01$} & \bm{$.85 \pm .01$} \\
         \bottomrule
    \end{tabularx}
    
    \label{tab:vae_mainpaper}
    \end{center}
\vspace{-5mm}
\end{table*}

\subsection{High-resolution experiments on MAYO}
\begin{table}[ht]
    \begin{center}
    \caption{Results on the MAYO datset for both Gaussian and Radon measurements, for trajectory length respectively of 50 and 10. We report mean and standard error of the mean in SSIM on the test set. We highlight in bold the best performance for each configuration.}
    \begin{tabularx}{\linewidth}{lcccc}
    \toprule
        & Gaussian & & Radon \\
        Models & 50 & & 10\\
        \midrule
         AE-R & $\bm{.575 \pm .008}$ & & $.444 \pm .009$\\
         \midrule
        AE-E2E & $.506 \pm .008$ & & $\bm{.623 \pm .012}$\\
        VAE-R ($\beta=1$) & $.521 \pm .008$ & & $.551 \pm .010$\\
        VAE-E2E ($\beta=1$) & $.413 \pm .012$ & & $.608 \pm .011$\\
         \bottomrule
    \end{tabularx}
    \label{tab:mayo}
    \end{center}
\vspace{-5mm}
\end{table}
Finally, we test our methods on the higher-resolution images from the MAYO dataset. We train the models AE-R, AE-E2E, and the corresponding variational models with $\beta=1$. We test both Gaussian and Radon measurements, with $T=50$ and $T=10$ respectively. The results are reported in Table \ref{tab:mayo} and Figure \ref{fig:mayo}. 
For Gaussian measurements, AE-R and VAE-R outperform their adaptive counterparts. We make the hypothesis that the way we use the policy is inefficient for Gaussian measurements, as the dimensionality of the action space scales quadratically with the image dimension (the action space is a vector of dimensions $1\times16384$). For Radon, while the adaptive models still outperform the random baselines, we observe a small decrease in performance over the trajectory for AE-E2E, which could be caused by overfitting in the unstructured latent space.

\begin{figure}[ht]
\centering
\includegraphics[width=0.49\linewidth]{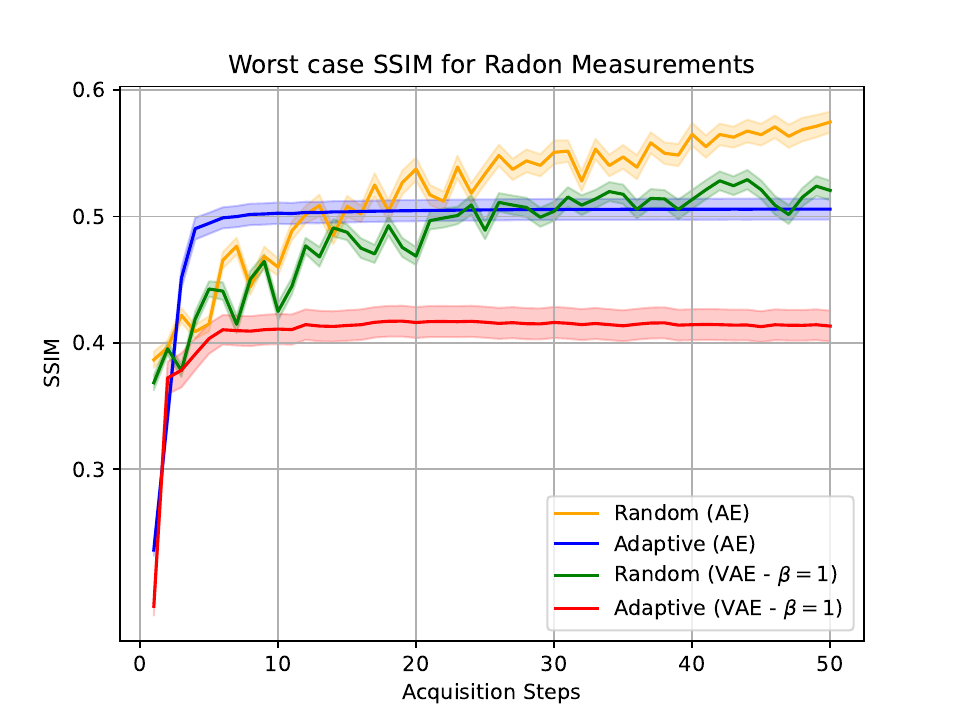}
\includegraphics[width=0.49\linewidth]{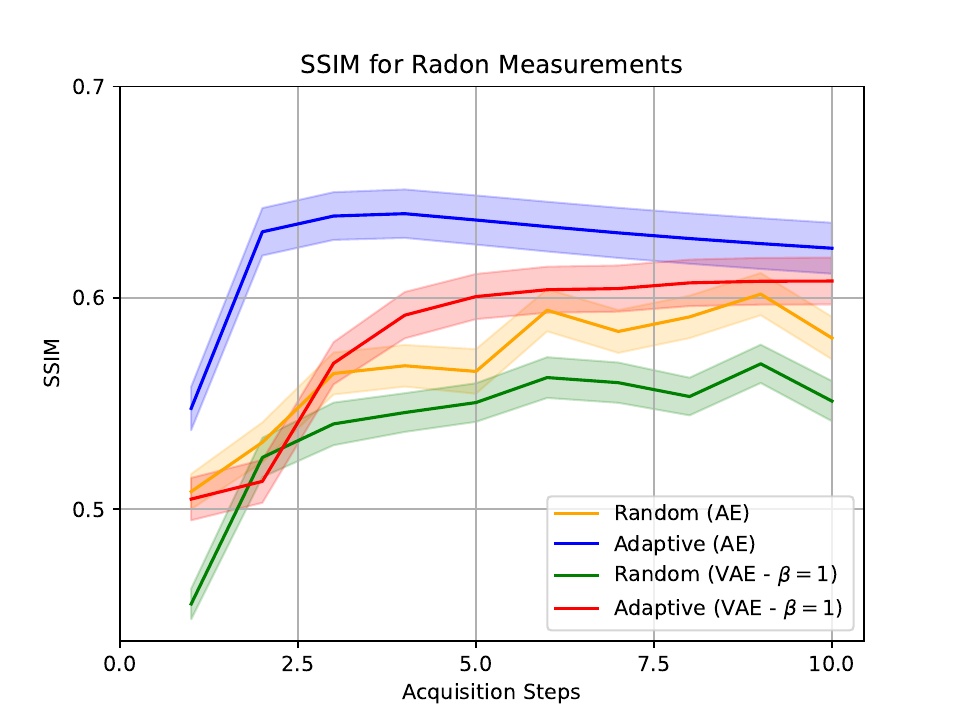}
\caption{Results on the MAYO test dataset, for AE-R (yellow), VAE-R ($\beta=1$, green), AE-E2E (blue) and VAE-E2E ($\beta=1$, red), as the mean and standard error of the mean in SSIM. Left: Gaussian measurements with trajectory length 50. Right: Radon measurements with trajectory length 10.}
\label{fig:mayo}
\end{figure}

\section{Conclusion}
We introduce a novel framework for end-to-end training of reconstruction and acquisition in generic compressed sensing problems. We show how using adaptive acquisition strategies can improve over random measurements, especially for a limited number of acquisition steps. We further introduced a variational formulation to obtain a better structure for the latent space, which when carefully tuned, can outperform the non-variational counterpart. Finally, we provided an ablation study over the effect of the choice of discount factor and policy gradient algorithm.\\
\myparagraph{Future work}
Our method does not outperform the random baseline in the cases of long acquisition horizons and high-dimensional action space. This is expected as random measurements are known to be theoretically optimal when a sufficiently long measurement horizon is available. Nonetheless, we propose some future directions to potentially improve upon our results.
A careful tuning of discount factor and other policy parameters can improve performances (see App. \ref{app:exp_ppo}).
For the case of Gaussian measurements, the dimensionality of the action space scales quadratically with the dimension of the images. Results suggest that this might be a major drawback, as the policy seems to find it difficult to select optimal actions in such a vast action space. Better ways to parametrize the probabilistic policy space could substantially improve the results, alongside more sophisticated policy learning strategies that can deal with such a complex search space.
Finally, for the models using the variational formulation, a fine-grained tuning of $\beta$ could also lead to superior results, finding the optimal trade-off between latent space disentanglement and reconstruction quality.

\nocite{langley00}

\bibliography{references}
\bibliographystyle{icml2024}

\newpage
\appendix
\onecolumn

\section{Gelfand Width Bounds for Adaptive Acquisition}
\label{app:gelfand_bounds}

In this section, we consider the argument based on Gelfand width analysis against the gain of adaptive sensing. In the classical result, the focus has been on the worst-case error for deterministic recovery algorithms and adaptive schemes that rely only on the outcome of previous measurements and not on the intermediate reconstruction. We extend the existing results and show that a similar result can be obtained even if we extend the adaptive schemes to use previous reconstructions as input or fix the recovery algorithm. This means that we might not expect additional gain by using previous reconstructions. 
We argue for two insights from the theory. First, the gain can show itself if we move away from worst-case error analysis. Second, we argue that the theory does not apply to a probabilistic adaptive scheme. This is a motivation behind using probabilistic formulations as presented above to get gains from adaptive sensing.

We follow the notation used in \citet{foucart_mathematical_2013} and Gelfand width-based analysis as in \citet{foucart_gelfand_2010}. To this end, in this section $m$ refers to the rows of the sensing matrix $\mA$, which in the rest of the text is referred to as $T$. Note that in this work, we have focused on noiseless observations, convenient for Gelfand width-based analysis.

\subsection{A Classical Result}
We start with some definitions and state the classical result mainly taken from Chapter 10 of \citet{foucart_mathematical_2013}, which appeared already in \citet{cohen_compressed_2009}. The first definition will provide the best possible \textit{worst} case error that we can get over all possible $m$-dimensional non-adaptive linear measurements, denoted by $\mA$, and recovery algorithms $\Delta$.

\begin{definition}
    The compressive $m$-width of a subset $K$ of a normed space $X$ is defined as:
\[
E^m(K,X):=\inf_{\mA,\Delta} \paran{ \sup_{\vx\in K}\norm{\vx-\Delta(\mA\vx)}, \mA:X\to\R^m, \mA \text{ is linear }, \Delta:\R^m\to X}.
\]
\end{definition}

The next definition extends the previous one to the case of adaptive measurements. 

\begin{definition}
    The adaptive compressive $m$-width of a subset $K$ of a normed space $X$ is defined as:
\[
E_{\text{ada}}^m(K,X):=\inf_{\mF,\Delta} \paran{ \sup_{\vx\in K}\norm{\vx-\Delta(\mF\vx)}, \mF:X\to\R^m, \mF \text{ is adaptive }, \Delta:\R^m\to X}.
\]
\end{definition}

Let's clarify what we mean by adaptive measurement. Consider the first measurement given by a linear functional $\lambda_1(\vx)$. The adaptive map is defined as
\[
\mF=\begin{pmatrix}
\lambda_1(\vx)\\
\lambda_{2;\lambda_1(\vx)}(\vx)\\
\vdots\\
\lambda_{m;\lambda_1(\vx),\lambda_{2;\lambda_1(\vx)}(\vx),\dots,\lambda_{m-1; \dots,\lambda_1(\vx) }(\vx) }(\vx)
\end{pmatrix}
\]
This simply means that the current measurement depends on the outcome of all previous measurements. 

It is clear that optimal adaptive measurements, as defined here, can at least match the performance of linear measurements. This implies that $E_{\text{ada}}^m(K,X)\leq E^m(K,X)$. The next question is to see if adaptive measurements can bring additional gains. To pinpoint the result, we need to introduce the notion of Gelfand $m$-width. 

\begin{definition}
The Gelfand $m$-width of a subset $K$ of a normed space $X$ is defined as
\[
d^m(K,X):=\inf \paran{ \sup_{\vx\in K\cap \ker(\mA)}\norm{\vx}, \mA:X\to\R^m, \mA \text{ is linear } }.
\]
\end{definition}

The Gelfand width provides a common ground for comparing adaptive and non-adaptive compressed sensing widths.

\begin{theorem}[Theorem 10.4. \citep{foucart_mathematical_2013}]
If $K$ is a subset of a normed space $X$, it is symmetric $K=-K$, and satisfies $K+K\subset aK$ for a positive constant $a>0$, we have:
\[
d^m(K,X)\leq E_{\text{ada}}^m(K,X)\leq E^m(K,X)\leq a. d^m(K,X).
\]
\label{thm:gelfand_original}
\end{theorem}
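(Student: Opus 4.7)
The plan is to prove the three inequalities in order of difficulty: the middle one is immediate from the definitions, the rightmost is a direct construction of a non-adaptive recovery from any matrix attaining the Gelfand width, and the leftmost requires the main idea of "freezing" an adaptive scheme at a fixed output trajectory so it behaves like a non-adaptive one on a certain subspace.

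For the middle inequality $E_{\text{ada}}^m(K,X)\leq E^m(K,X)$, I would just observe that any linear measurement map $\mA$ is a particular instance of an adaptive map $\mF$ (one whose later functionals do not depend on earlier outputs), so the infimum defining $E_{\text{ada}}^m$ is taken over a strictly larger class than the one defining $E^m$.

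For the right inequality $E^m(K,X)\leq a\, d^m(K,X)$, fix a linear $\mA:X\to\R^m$ and define a recovery $\Delta(\vy)$ as follows: if $\{\vx\in K:\mA\vx=\vy\}$ is non-empty, pick any such $\vx$; otherwise set $\Delta(\vy)=0$. For any $\vx\in K$ with $\vy=\mA\vx$, we have $\Delta(\vy)\in K$ and $\mA(\vx-\Delta(\vy))=0$, so $\vx-\Delta(\vy)\in\ker\mA$. By symmetry $-\Delta(\vy)\in K$, so $\vx-\Delta(\vy)\in K+K\subset aK$, i.e., $(\vx-\Delta(\vy))/a\in K\cap\ker\mA$. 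Hence $\|\vx-\Delta(\vy)\|\leq a\sup_{\vz\in K\cap\ker\mA}\|\vz\|$, and infimizing over $\mA$ gives the claim.

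The main obstacle, as expected, is the left inequality $d^m(K,X)\leq E_{\text{ada}}^m(K,X)$: the trick is to linearize the adaptive scheme. Fix any adaptive measurement map $\mF$ and any recovery $\Delta$. Let $\mu_1=\lambda_1$, $\mu_2=\lambda_{2;0}$, $\mu_3=\lambda_{3;0,0}$, and in general $\mu_k=\lambda_{k;0,\ldots,0}$, the linear functional that $\mF$ uses at step $k$ when all previously reported measurements are $0$. Collect these into a non-adaptive linear map $\mA_0:X\to\R^m$ with rows $\mu_1,\ldots,\mu_m$. I claim that for every $\vx\in\ker\mA_0$ we have $\mF(\vx)=\vzero$: indeed, $\mu_1(\vx)=\lambda_1(\vx)=0$, so step two uses $\lambda_{2;0}=\mu_2$ giving $0$, and inductively all reported measurements equal $0$. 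Consequently $\Delta(\mF(\vx))=\Delta(\vzero)$ for every $\vx\in K\cap\ker\mA_0$, and by symmetry $-\vx\in K\cap\ker\mA_0$ too. The triangle inequality then gives
\begin{equation*}
2\|\vx\|=\|\vx-(-\vx)\|\leq\|\vx-\Delta(\vzero)\|+\|(-\vx)-\Delta(\vzero)\|\leq 2\sup_{\vx'\in K}\|\vx'-\Delta(\mF(\vx'))\|.
\end{equation*}
Taking the supremum over $\vx\in K\cap\ker\mA_0$ yields $\sup_{\vx\in K\cap\ker\mA_0}\|\vx\|\leq\sup_{\vx'\in K}\|\vx'-\Delta(\mF(\vx'))\|$. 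Since $\mA_0$ is a specific non-adaptive map, the left side is bounded below by $d^m(K,X)$, and taking the infimum over $(\mF,\Delta)$ on the right gives the desired bound. This step is the conceptual heart of the theorem: the symmetry hypothesis $K=-K$ is exactly what lets us turn the one-sided reconstruction error into a two-sided kernel diameter, and "freezing at zero" is what reduces an adaptive query strategy to a bona fide linear measurement.
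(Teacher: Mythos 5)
Your proposal is correct and follows essentially the same route as the paper's proof: the middle inequality by noting linear maps are special adaptive maps, the upper bound via a consistent (kernel-respecting, $K$-valued) recovery combined with $K+K\subset aK$, and the lower bound by freezing the adaptive functionals at the all-zero measurement history to obtain a linear $\mA$ and exploiting symmetry through the triangle inequality. The only differences are cosmetic (writing $2\|\vx\|=\|\vx-(-\vx)\|$ with $\Delta(\vzero)$ inserted instead of the paper's averaged form, and constructing the consistent $\Delta$ explicitly rather than positing it).
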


This theorem already implies that adaptive and non-adaptive best worst-case errors are constrained from both directions by the Gelfand width. For example, in sparse recovery problems, the number of required measurements for recovering a $s$-sparse vector scales as $\Omega(s\log(N/s))$. The theorem implies that one cannot hope for better scaling with adaptive measurements. Apparently, many experimental results seem to counter this conclusion and show a concrete gain for adaptive measurements. 

We would like to characterize the reason behind this discrepancy by examining some conjectures around it. 
\begin{itemize}
    \item Theorem \ref{thm:gelfand_original} seems to be about \textit{the worst case error} only. The other statistics of the error can be potentially improved by adaptiveness. This hypothesis is stated in \citet{foucart_mathematical_2013}.
    \item The notion of adaptiveness is too restrictive. The adaptive $\mF$ only depends on the outcome of previous measurements and not necessarily the reconstructed vector from those measurements or the residual errors. These are commonly used in the literature for building adaptive methods. 
    \item The infimum computed over all possible linear measurements and recovery algorithms assumes an exhaustive search rarely done in practice. For a suboptimal recovery method, adaptive measurements can provide gain.
\end{itemize}
These conjectures are formulated based on inspection of the definitions.  In what follows, we carefully examine these conjectures. Before that, we look into the proof of the theorem again following what was presented in \citet{foucart_mathematical_2013}. 

\subsection{Proof Outline}

\myparagraph{Proof of $d^m(K,X)\leq E_{\mathrm{ada}}^m(K,X)$}

Consider any reconstruction map $\Delta(\cdot)$ and an adaptive sensing matrix $\mF$. The main idea behind this inequality is to construct a non-adaptive linear transformation $\mA$ from $\mF$. To do so, consider vectors in $K$ that are in the kernel of the first measurement, namely all $\vx\in K$ such that $\lambda_1(\vx)=0$. From this set, select all the vectors in the kernel of the second adaptive measurement, which is:
\[
\left\{ \vx\in K\cap \ker(\lambda_1): \lambda_{2;\lambda_1(\vx)}(\vx)=\lambda_{2;0}(\vx) = 0\right\}
\]
Note that this set is given by $K\cap\ker(\lambda_1)\cap\ker(\lambda_{2;0})$.  
Continuing this process successively, we arrive at the set of vectors $\vx$ in the set $K\cap\ker(\lambda_1)\cap\ker(\lambda_{2;0})\cap\dots\cap\ker(\lambda_{m;0,\dots,0})$, for which $\mF(\vx)=0$. Define $\mA$ as:
\begin{equation}
\mA=\begin{pmatrix}
\lambda_1(\vx)\\
\lambda_{2;0}(\vx)\\
\vdots\\
\lambda_{m;0,\dots,0}(\vx)
\end{pmatrix}.
\end{equation} 
Note that for all $\vx\in K\cap \ker(\mA)$, we have $\mF(\vx)=0$. Using symmetry of $K$, we get $-\vx\in K\cap \ker(\mA)$ and therefore $\mF(-\vx)=0$. We can now use the triangle inequality to get for all $\vx\in K\cap \ker(\mA)$:
\begin{align}
\norm{\vx}_2  & = \norm{\frac{1}{2}(\vx-\Delta(\mF(\vx))-\frac{1}{2}(-\vx-\Delta(\mF(-\vx))}_2 \\
 & \leq \frac{1}{2}\norm{\vx-\Delta(\mF(\vx))}_2 +\frac{1}{2} \norm{-\vx-\Delta(\mF(-\vx))}_2\\
 & \leq  \sup_{\vx\in K}\norm{\vx-\Delta(\mF\vx)}_2.
\end{align}
That's how the Gelfand width is connected to adaptive compressive width. We have:
\begin{align}
d^m(K,X)& \leq \sup_{\vx\in K\cap \ker(\mA)}\norm{\vx}_2 \\
 & \leq  \sup_{\vx\in K}\norm{\vx-\Delta(\mF\vx)}_2.
\end{align}
Since this holds for any $\Delta$ and $\mF$, we can get the infimum over them and obtain:
\[
d^m(K,X)\leq E_{\text{ada}}^m(K,X).
\]
Note that the key elements of the proof are first building the matrix $\mA$ from $\mF$,  and having $\Delta(\mF(\vx))=\Delta(\mF(-\vx))$. The rest of the operation holds regardless. 

\myparagraph{Proof of $E^m(K,X)\leq a d^m(K,X)$}

The starting point is this inequality, which holds in general for any $\mA$ and $\Delta$:
\[
E^m(K,X)\leq \sup_{\vx\in K}\norm{\vx-\Delta(\mA\vx)}_2.
\]
To get to Gelfand width at the output, we must select $\Delta$ properly. Interestingly, the only condition required for $\Delta$ is consistency. The condition states that for any $\vx$ in $K$ and the observation $\vy=\mA\vx$, the recovery algorithm $\Delta$ returns a vector $\hat{\vx}$ that is in $K$ and in the pre-image of $\vy$, $\mA^{-1}(\vy)$, namely $\mA\hat{\vx}=\mA\vx$:
\[
\Delta(\vy)\in K\cap \mA^{-1}(\vy).
\]
With this mild assumption, we get:
\[
\norm{\vx-\Delta(\mA\vx)}_2 \leq \sup_{\vz\in K\cap \mA^{-1}(\mA\vx)}\norm{\vx-\vz}_2
\]
and thereby:
\[
\sup_{\vx\in K}\norm{\vx-\Delta(\mA\vx)}_2\leq \sup_{\vx\in K}\sup_{\vz\in K\cap \mA^{-1}(\mA\vx)}\norm{\vx-\vz}_2.
\]
Note that $\vx-\vz\in \ker(\mA)$, and $\vx-\vz\in K-K \subset aK$, and therefore:
\[
\sup_{\vx\in K}\norm{\vx-\Delta(\mA\vx)}_2\leq\sup_{\vx\in K}\sup_{\vz\in K\cap \mA^{-1}(\mA\vx)}\norm{\vx-\vz}_2 \leq \sup_{\vx\in aK\cap \ker(\mA)}\norm{\vx}_2 =a \sup_{\vx\in K\cap \ker(\mA)}\norm{\vx}_2. 
\]
The rest of the proof is about taking the infimum over $\mA$ and $\Delta$ from both sides, which gives us $E^m(K,X)\leq a d^m(K,X)$. In this part, the key element of the proof was the consistency assumption for $\Delta$.

We are now ready to explore if the theoretical results hold by relaxing some of the assumptions. 

\subsection{\textit{More} inputs for adaptive measurements will not help}

As we mentioned above, one of the conjectures was the limited notion of adaptiveness used in the result. For example, in \citet{bakker2020experimental}, the input to the policy network is the reconstruction from the previous measurements. 

We extend the definition of adaptive sensing to include previous measurements and construction. This would include all the information available in the reconstruction pipeline apart from recovery algorithm details. 

We extend the definition of the recovery algorithm $\Delta$ to incorporate intermediate reconstruction:
\begin{equation}
    \Delta\defeq \left\{\Delta_j, j\in[m], \Delta_j: \R^j\to X \right\}.
    \label{eq:adaptive_recovery}
\end{equation}
Consider again the first measurement $\lambda_1(\vx)$. In the extended setting, the next measurement would be given by:
\[
\lambda_{2;\lambda_1(\vx),\Delta_1(\lambda_1(\vx))}(\vx).
\]
The new adaptive map is then defined as
\[
\mG=\begin{pmatrix}
\lambda_1(\vx)\\
\lambda_{2;\lambda_1(\vx),\Delta_1(\lambda_1(\vx))}(\vx)\\
\vdots\\
\lambda_{m;\lambda_1(\vx),\Delta_1(\lambda_1(\vx)), \lambda_{2;\lambda_1(\vx),\Delta_1(\lambda_1(\vx))}(\vx),\dots,\lambda_{m-1; \dots,\lambda_1(\vx),\Delta_1(\lambda_1(\vx)) }(\vx), \Delta_{m-1}(\lambda_{m-1;\dots}(\vx)) }(\vx)
\end{pmatrix}
\]
Just to get a better intuition from this cumbersome notation, we denote the individual measurement obtained at step $j$ by $y_j$, and the measurement vector up to time $j$ by $\vy_j=\paran{y_1,\dots,y_j}^\top$.  The extended adaptive sensing matrix is then given by:
\begin{equation}
\mG=\begin{pmatrix}
\lambda_1(\vx)\\
\lambda_{2;\vy_1,\Delta_1(\vy_1)}(\vx)\\
\lambda_{3;\vy_2,\Delta_1(\vy_1), \Delta_2(\vy_2)}(\vx)\\
\vdots\\
\lambda_{m;\vy_m,\Delta_1(\vy_1), \dots, \Delta_{m-1}(\vy_{m-1})}(\vx).
\end{pmatrix}
\label{eq:extended_adaptive_mat}
\end{equation}
The question is whether the previous lower bound using Gelfand width would hold for this new adaptive sensing matrix. Let's start with the new definition.
\begin{definition}
    The extended adaptive compressive $m$-width of a subset $K$ of a normed space $X$ is defined as:
\[
E_{\text{ext.ada}}^m(K,X):=\inf_{\mG,\Delta} \paran{ \sup_{\vx\in K}\norm{\vx-\Delta(\mG\vx)}, \mG:X\to\R^m, \mG \text{ is adaptive and given in \eqref{eq:extended_adaptive_mat} }, \Delta \text{ is given in \eqref{eq:adaptive_recovery}} }.
\]
\end{definition}
The following theorem states that the extended adaptive measurements do not bring any gain either.
\begin{theorem}
If $K$ is a subset of a normed space $X$, it is symmetric $K=-K$, and satisfies $K+K\subset aK$ for a positive constant $a>0$, we have:
\[
d^m(K,X)\leq E_{\text{ext.ada}}^m(K,X)\leq E^m(K,X)\leq a. d^m(K,X).
\]
\label{thm:gelfand_extended}
\end{theorem}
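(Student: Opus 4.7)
The plan is to mirror the proof of Theorem~\ref{thm:gelfand_original} from the previous subsection, verifying that the key construction still goes through when the adaptive sensing protocol is allowed to depend on intermediate reconstructions $\Delta_j(\vy_j)$ in addition to the raw measurements $\vy_j$. The chain consists of three inequalities, and only the leftmost one requires genuine work.

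First, the middle inequality $E_{\mathrm{ext.ada}}^m(K,X)\leq E^m(K,X)$ is immediate: any non-adaptive pair $(\mA,\Delta)$ corresponds to a valid extended-adaptive pair in which each linear functional $\lambda_{j}$ happens not to depend on the feedback arguments. The rightmost inequality $E^m(K,X)\leq a\cdot d^m(K,X)$ is exactly the content of the non-adaptive part of Theorem~\ref{thm:gelfand_original} and is reused verbatim; the consistency argument and the containment $K+K\subset aK$ do not involve adaptivity at all.

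For the left inequality $d^m(K,X)\leq E_{\mathrm{ext.ada}}^m(K,X)$, I would fix any admissible pair $(\mG,\Delta)$ with $\Delta=\{\Delta_j\}_{j\in[m]}$ and construct an associated \emph{non-adaptive} linear map $\mA$ row by row, by freezing every feedback argument to zero. Concretely, since $\Delta_1(\vzero),\Delta_2(\vzero,\vzero),\dots,\Delta_{m-1}(\vzero,\dots,\vzero)$ are fixed elements of $X$ depending only on $\Delta$, each functional
\[
\mu_j(\cdot):=\lambda_{j;\,\vzero,\,\Delta_1(\vzero),\,\ldots,\,\Delta_{j-1}(\vzero,\dots,\vzero)}(\cdot)
\]
is a bona fide linear functional of $\vx$ alone. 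Setting $\mA\vx:=(\mu_1(\vx),\dots,\mu_m(\vx))^\top$ gives a linear map $\mA:X\to\R^m$. The crucial step is an induction on $j$ showing that for every $\vx\in K\cap\ker(\mA)$ one has $\mG(\vx)=\vzero$: if $y_1=\mu_1(\vx)=0$, then $\Delta_1(\vy_1)=\Delta_1(\vzero)$ and hence the second row of $\mG$ evaluated at $\vx$ equals $\mu_2(\vx)=0$; iterating, all adaptive arguments collapse to their zero-feedback values and each $y_j$ vanishes.

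With $\mG(\vx)=\vzero$ secured on $K\cap\ker(\mA)$, I would invoke symmetry $K=-K$ to get $\mG(-\vx)=\vzero$ as well, so $\Delta_m(\mG(\vx))=\Delta_m(\mG(-\vx))$, and then apply the triangle inequality exactly as in the classical proof:
\[
\norm{\vx}\leq \tfrac{1}{2}\norm{\vx-\Delta_m(\mG(\vx))}+\tfrac{1}{2}\norm{-\vx-\Delta_m(\mG(-\vx))}\leq \sup_{\vz\in K}\norm{\vz-\Delta_m(\mG(\vz))}.
\]
Taking the supremum over $\vx\in K\cap\ker(\mA)$ on the left and the infimum over admissible $(\mG,\Delta)$ on the right yields $d^m(K,X)\leq E_{\mathrm{ext.ada}}^m(K,X)$. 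The only conceptual subtlety—and the step I would scrutinize most carefully—is the inductive collapse of the feedback chain: one must check that freezing the first $j-1$ measurements to zero automatically freezes the corresponding reconstructions, so that the row $\mu_j$ of $\mA$ really coincides with the $j$-th row of $\mG$ on the subspace $\ker(\mu_1)\cap\cdots\cap\ker(\mu_{j-1})$. This is precisely where allowing $\Delta_j$ as an additional argument adds no new information, because $\Delta_j$ is a deterministic function of quantities that have already been pinned to zero.
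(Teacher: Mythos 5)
Your proposal is correct and follows essentially the same route as the paper's proof: both construct the non-adaptive matrix $\mA$ by freezing all feedback arguments (measurements and intermediate reconstructions) to zero, verify inductively that $\mG$ vanishes on $K\cap\ker(\mA)$, and then reuse the symmetry and triangle-inequality argument, with the upper bounds carried over unchanged from the classical theorem. The inductive collapse of the feedback chain that you single out as the key subtlety is exactly the point the paper's construction relies on.
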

\begin{proof}
The upper bounds are just replications of what we proved before. So, we only need to prove $d^m(K,X)\leq E_{\text{ext.ada}}^m(K,X)$.

First of all, see that we can build a matrix $\mA$ in a similar way by considering the vectors in $K\cap\ker(\lambda_1)\cap\ker(\lambda_{2;\zero_1,\Delta_1(\zero_1)})\cap\dots\cap\ker(\lambda_{m;\zero_{m-1},\Delta_1(\zero_1),\dots,\Delta_{m-1}(\zero_{m-1})})$, for which $\mG(\vx)=0$, namely:
\begin{equation}
\mA=\begin{pmatrix}
\lambda_1(\vx)\\
\lambda_{2;\zero_1,\Delta_1(\zero_1)}\vx)\\
\vdots\\
\lambda_{m;\zero_{m-1},\Delta_1(\zero_1),\dots,\Delta_{m-1}(\zero_{m-1})}(\vx)
\end{pmatrix}.
\end{equation} 
Now using this $\mA$, we have again that for all $\vx\in K\cap \ker(\mA)$, $\mG(\vx)=0$. We can replicate the proof:
\begin{align}
\norm{\vx}_2  & = \norm{\frac{1}{2}(\vx-\Delta(\mG(\vx))-\frac{1}{2}(-\vx-\Delta(\mG(-\vx))}_2 \\
 & \leq \frac{1}{2}\norm{\vx-\Delta(\mG(\vx))}_2 +\frac{1}{2} \norm{-\vx-\Delta(\mG(-\vx))}_2\\
 & \leq  \sup_{\vx\in K}\norm{\vx-\Delta(\mG\vx)}_2,
\end{align}
which implies:
\begin{align}
d^m(K,X)& \leq \sup_{\vx\in K\cap \ker(\mA)}\norm{\vx}_2 \leq  \sup_{\vx\in K}\norm{\vx-\Delta(\mG\vx)}_2\leq E_{\text{ext.ada}}^m(K,X).
\end{align}
\end{proof}

\begin{remark}
Note that in the above argument, the key inequality is $\sup_{\vx\in K\cap \ker(\mA)}\norm{\vx}_2 \leq  \sup_{\vx\in K}\norm{\vx-\Delta(\mG\vx)}_2$. If the choice sensing matrix, adaptive or not, is limited to a specific subset of all matrices, say $\gA$, then we can redefine the Gelfand width limited to $\gA$, and obtain a similar lower bound.  
\label{rem:limited_matrix_set}
\end{remark}

\subsection{Adaptive sensing would not improve for a fixed recovery algorithm}

We now examine the third hypothesis, which is about fixing the recovery algorithm, which might be suboptimal.

Note that the lower bound would still hold regardless of the choice of $\Delta$, namely $d^m(K,X) \leq  \sup_{\vx\in K}\norm{\vx-\Delta(\mG\vx)}_2$. This time, we need to examine the upper bound. 
It turns out that as long as the recovery algorithm is consistent, that is, $\Delta(\vy)\in K\cap \mA^{-1}(\vy)$, we  still get:

\begin{equation}
    \norm{\vx-\Delta(\mA\vx)}_2 \leq \sup_{\vz\in K\cap \mA^{-1}(\mA\vx)}\norm{\vx-\vz}_2\leq a \sup_{\vx\in K\cap \ker(\mA)}\norm{\vx}_2.
\end{equation}

The upper bound follows accordingly. This simple derivation shows that not much is to be expected from adaptive sensing, even if the recovery algorithm has limitations. 

We can repeat this argument by considering a subset of all possible sensing matrices. It can be similarly shown that a similar bound can be obtained on the errors.

\subsection{Where is the gain of adaptive sensing?}
As we have seen in the above derivations, some of the conjectures about the source of gain in adaptive sensing can be debunked. To summarize, even extending the notion of adaptiveness or restricting the measurement matrix and recovery algorithm sets would not break the theorem. So the natural question is where else we can look for the gains of adaptive sensing.

The most obvious angle, previously mentioned in the literature, is about moving from worst-case error to average error. This change will break some of the inequalities used in the proof, for example, $\norm{\vx-\Delta(\mA\vx)}_2 \leq \sup_{\vz\in K\cap \mA^{-1}(\mA\vx)}\norm{\vx-\vz}_2$. 

The other key point, crucial for both lower and upper bound, was the deterministic nature of the adaptive scheme and recovery algorithm, for example, assuming $\Delta(\mA\vx)$ does not have stochastic components for instance, a random initialization, or the adaptive measurement $\lambda_{j;\dots}(\vx)$ is a deterministic function of past. In the latter case, one cannot find a deterministic $\mA$ from $\mG$ to use in the lower bound proof.

The consistency was another assumption, although in most cases, we can expect the recovery algorithm to provide an approximation close enough to the underlying data manifold and satisfy measurement consistency.

To summarize, there are two angles where the gain of adaptive sensing shows itself. First, it is about moving away from worst-case errors. Second, we should consider probabilistic adaptive schemes. Our problem formulation in the paper follows these guidelines.

\section{Lower bound for reconstruction loss in Adaptive Acquisition}
\label{app: lower_bound}
In this section, we derive a variational bound for the reconstruction loss in adaptive acquisition schemes. The graphical model of our scenario is represented in Figure \ref{fig:graphical_model_for_rvs}. The random variable $x_{1:T}$ correspond to the reconstructed signal, $a_{1:T}$ are the measurement actions chosen adaptively, and $y_{1:T}$ are the observations.

\begin{figure}
    \centering
    \includegraphics[width=0.45\linewidth]{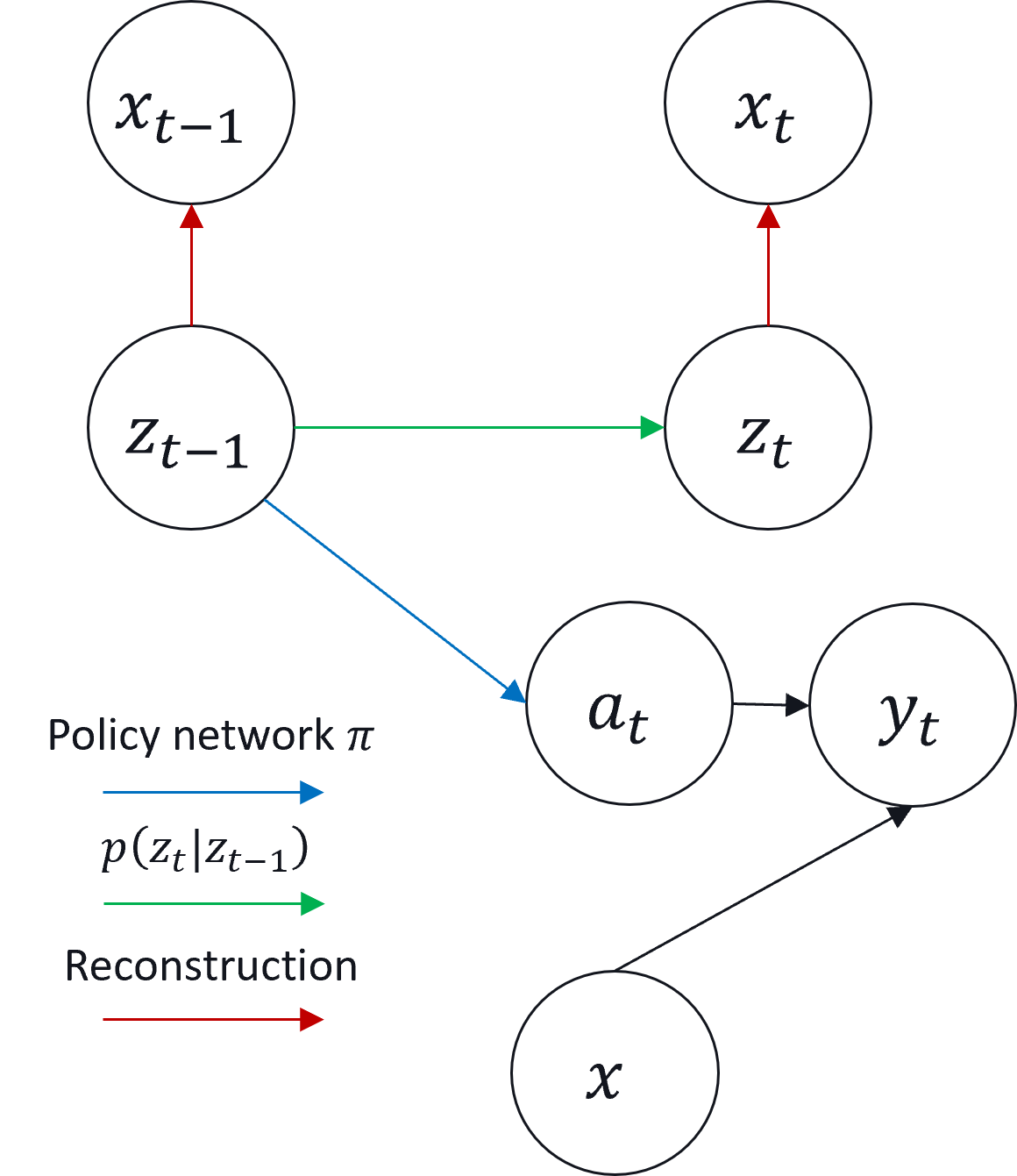}
    \caption{Graphical Model for Random Variables}
    \label{fig:graphical_model_for_rvs}
\end{figure}

The derivation runs as follows, and involves standard factorization steps with the ELBO lower bound:
\begin{align}
    &\log p(x_{1:T} \mid a_{1:T}, y_{1:T}) = \log \int p(x_{1:T}, z_{1:T} \mid a_{1:T}, y_{1:T}) dz_{1:T}\\ \nonumber
    &= \log \int p(x_{1:T}, z_{1:T} \mid a_{1:T}, y_{1:T})\frac{q(z_{1:T} \mid x_{1:T}, a_{1:T}, y_{1:T})}{q(z_{1:T} \mid x_{1:T}, a_{1:T}, y_{1:T})}dz_{1:T}\\ \nonumber
    &=\log E_{z_{1:T}\sim(z_{1:T} \mid x_{1:T}, a_{1:T}, y_{1:T})}\left[\frac{p(x_{1:T}, z_{1:T} \mid a_{1:T}, y_{1:T})}{q(z_{1:T} \mid x_{1:T}, a_{1:T}, y_{1:T})}\right]\\ \nonumber
    &\stackrel{(a)}{\geq}E_{z_{1:T}}\left[\log \frac{p(x_{1:T}, z_{1:T} \mid a_{1:T}, y_{1:T})}{q(z_{1:T} \mid x_{1:T}, a_{1:T}, y_{1:T})}\right]\\ \nonumber
    &= E_{z_{1:T}}\left[\log \frac{\prod_{t=1}^Tp(x_t\mid z_t, a_t, y_t) p(z_t\mid z_{1:t-1}, y_{1:t}, a_{1:t})}{\prod_{t=1}^T q(z_t\mid x_t, z_{1:t-1}, y_{1:t}, a_{1:t})}\right]\\ \nonumber
    &= E_{z_{1:T}}\left[\sum_{t=1}^T \log p(x_t\mid z_t, a_t, y_t) + \log p(z_t\mid z_{1:t-1}, y_{1:t}, a_{1:t}) - \log q(z_t\mid x_t, z_{1:t-1}, y_{1:t}, a_{1:t})\right]\\ \nonumber
    &= \sum_{t=1}^T E_{z_{1:T}}\left[\log p(x_t\mid z_t, a_t, y_t) + \log p(z_t\mid z_{1:t-1}, y_{1:t}, a_{1:t}) - \log q(z_t\mid x_t, z_{1:t-1}, y_{1:t}, a_{1:t})\right] \\ \nonumber
    &= \sum_{t=1}^T E_{z_T}E_{z_{1:T-1}}\left[\log p(x_t\mid z_t, a_t, y_t) + \log p(z_t\mid z_{1:t-1}, y_{1:t}, a_{1:t}) - \log q(z_t\mid x_t, z_{1:t-1}, y_{1:t}, a_{1:t})\right] \\ \nonumber
    &= \sum_{t=1}^T E_{z_{1:T-1}} E_{z_T}\left[\log p(x_t\mid z_t, a_t, y_t) + \log p(z_t\mid z_{1:t-1}, y_{1:t}, a_{1:t}) - \log q(z_t\mid x_t, z_{1:t-1}, y_{1:t}, a_{1:t})\right] \\ \nonumber
    &= \sum_{t=1}^T E_{z_{1:T-1}} E_{z_T}\left[\log p(x_t\mid z_t, a_t, y_t) - \log \frac{q(z_t\mid x_t, z_{1:t-1}, y_{1:t}, a_{1:t})}{p(z_t\mid z_{1:t-1}, y_{1:t}, a_{1:t})}\right] \\ \nonumber
    &= \sum_{t=1}^T E_{z_{1:T-1}} \left[E_{z_T}[\log p(x_t\mid z_t, a_t, y_t)] - E_{z_T}\left[\log \frac{q(z_t\mid x_t, z_{1:t-1}, y_{1:t}, a_{1:t})}{p(z_t\mid z_{1:t-1}, y_{1:t}, a_{1:t})}\right]\right] \\ \nonumber
\end{align}
In the derivations, $(a)$ is the ELBO step. Note that the derivations are general and assumed full dependence on the history for the prior $p(\cdot)$ and the approximate posterior $q(\cdot)$. First, we can assume a Markov assumption on the sequence of $z_t$'s and simplify $p(z_t\mid z_{1:t-1}, y_{1:t}, a_{1:t})$ as $p(z_t\mid z_{t-1})$. Next, we assume that the latent variable $z_t$ summarizes whatever necessary for the reconstruction, namely replacing $p(x_t\mid z_t, a_t, y_t)=p(x_t\mid z_t)$. Finally, we use a recurrent architecture, a GRU, in our implementation of approximate posterior $q(\cdot)$, and there, the history of observations $y_{1:t}$, and the actions $a_{1:t}$ is summarized through a hidden state $h_t$. This means that $q(z_t\mid x_t, z_{1:t-1}, y_{1:t}, a_{1:t})$ is given by  $q(z_t\mid a_t, y_{t}, h_{t})$. Using these simplifications, we will arrive at the following equation.
\begin{align*}
    \sum_{t=1}^T E_{z_{1:T-1}} & \left[E_{z_T}[\log p(x_t\mid z_t, a_t, y_t)] - E_{z_T}\left[\log \frac{q(z_t\mid x_t, z_{1:t-1}, y_{1:t}, a_{1:t})}{p(z_t\mid z_{1:t-1}, y_{1:t}, a_{1:t})}\right]\right] \\ \nonumber
    &{=} \sum_{t=1}^T E_{z_{1:T-1}} \left[E_{z_T}[\log p(x_t = x\mid z_t, a_t, y_t)] - D_{KL}(q(z_t \mid a_{t}, y_t, h_t) \parallel p(z_t \mid z_{t-1}))\right] \\ \nonumber
    &= \sum_{t=1}^T E_{z_{1:T-1}} [E_{z_T}[\log p(x_t = x\mid z_t)]  -  D_{KL}(q(z_t \mid a_{t}, y_t, h_t) \parallel p(z_t \mid z_{t-1}))].
\end{align*}

\section{Training details}
\label{app: training_details}
For all the experiments, we train the models for 100 epochs, with batch size 128. All the models are trained with ADAM optimizer \citep{kingma2014adam}, with learning rate $lr=1e^{-3}$ for reconstruction and  $lr=1e^{-4}$ for acquisition. In the Gaussian case, the measurements $a_t$ are normalized before computing the corresponding observation. For Radon, after computing $y_t = R(a_t,x)$, $a_t$ is divided by $\pi$ to be in the range $[-1,1]$, while $y_t$ is scaled to be in the range $[0,1]$.
\subsection{Preprocessing for MAYO dataset}
\label{app:mayo}
We use the DICOM image data consisting of $512 \times 512$ images belonging to three different classes labeled N for
neuro, C for chest, and L for liver. To train our models, we consider $\sim$1.5K samples from the N subset and split them into train, validation, and test sets comprising $\sim$80\%, $\sim$10\%, and $\sim$10\% of the images, respectively. Before feeding a model, we apply a random crop and then rescale the images to $128 \times 128$. Finally, we normalize the pixel values in $[0, 1]$.
\subsection{Architectures}
\label{app: architectures}
\myparagraph{GRU encoder}
At each time step, the vectors $a_t$ and $y_t$ are concatenated and fed to a GRU. For MNIST, the GRU has $1$ layer with $128$ units, while for mayo $x$ layers with $y$ units. Then, a fully connected layer maps the output of the GRU to the latent size dimension ($\times 2$ in the variational experiments to account for mean and standard deviation).\\
\myparagraph{Convolutional Decoder}
For MNIST experiments, the decoder is a two-layer transposed convolution network with 64 and 128 channels. For MAYO, the network has 8 transposed convolution residual blocks. We do not use any normalization layer.\\
\myparagraph{Policy Network}
The policy network has the same architecture as the Decoder when we use Gaussian measurements (outputs two channels instead of one). For Radon measurements, the policy is an MLP with one hidden layer with $256$ units.

\section{Additional experimental results}
\subsection{Full Results from MNIST}
In~\autoref{tab:random_vs_e2e_app} we report the full set of results concerning the MNIST dataset. Specifically, compared to~\autoref{tab:random_vs_e2e_mainpaper}, in this section, we add to the table the results from the worst-case scenario for each model we trained. 

\begin{table*}[ht]
    \begin{center}
    \caption{Results on the MNIST datset for both Gaussian and Radon measurements in SSIM (higer is better). The trajectory length of the experiment is reported on the second row. For each model, we report mean and standard error of the mean on on the row signed as M, while we report the worst case error on the row signed as W. All results are computed on the whole test set for one run. We highlight in bold the best performance for each configuration.}
    \begin{tabularx}{0.88\linewidth}{llccccccc}
        \toprule
        & & \multicolumn{3}{c}{Gaussian} & & \multicolumn{3}{c}{Radon} \\
         \cmidrule(lr){3-5} \cmidrule(lr){7-9}\\
        Models & & 20 & 50 & 100 & & 5 & 10 & 20\\
        \midrule
         \multirow{2}{*}{AE-R} & \multirow{1}{*}{M}  & $.49 \pm .02$ & $\bm{.64 \pm .02}$ & \bm{$.73 \pm .01$} & & $.58 \pm .02$ & $.69 \pm .01$ & $.77 \pm .01$ \\
         & \multirow{1}{*}{W}   & $-.01$ & \bm{$.07$} & \bm{$.21$} & & $-.01$ & $.08$ & $.17$ \\
         \midrule
        \multirow{2}{*}{AE-P} & \multirow{1}{*}{M}  & $.49 \pm .02$ & $.42 \pm .02$ & $.40 \pm .02$ & & $.66 \pm .01$ & $.47 \pm .02$ & $.43 \pm .02$ \\
         & \multirow{1}{*}{W}   & $-.12$ & $-.10$ & $-.12$ & & $.03$ & $-.10$ & $-.06$ \\
         \midrule
        \multirow{2}{*}{AE-E2E} & \multirow{1}{*}{M}  & \bm{$.62 \pm .02$} & $.59 \pm .02$ & $.60 \pm .02$ & & \bm{$.83 \pm .01$} & \bm{$.84 \pm .01$} & \bm{$.85 \pm .01$} \\
         & \multirow{1}{*}{W}   & \bm{$.01$} & $-.05$ & $.02$ & &\bm{$.22$} & \bm{$.26$} & \bm{$.23$} \\
    \bottomrule
    \end{tabularx}
    \label{tab:random_vs_e2e_app}
    \end{center}
\end{table*}

\subsection{The effect of the discount factor $\gamma$}
\label{app:exp_ppo}
\citet{bakker2020experimental} suggests that greedy policies can perform on par or even outperform policies trained with a discounted reward. We investigate the role of the discount factor in this section. We additionally explore the effectiveness of the Vanilla Policy gradient and compare it to the more recent and efficient Proximal Policy Optimization (PPO) \citep{schulman2017proximal}.
The results are reported in table \ref{tab:ppo} and Figure \ref{fig:vpg_ppo} for AE-E2E, with 20 Gaussian measurements on MNIST.
Our experiments show that, at least with our model and training strategy, a carefully discounted objective function leads to the best results, as it generally happens in RL. The fact that PPO with $\gamma=0.9$ obtains the best performance suggests that using more recent and advanced policy gradient algorithms could also bring additional improvement.

\begin{table*}[ht]
    \begin{center}
    \caption{Comparison between PPO and VPG for different discount factors. All the models are trained with trajectories of 20 Gaussian acquisitions on the MNIST test dataset. We report mean and standard error of the mean at the end of the trajectory, in SSIM (higher is better). We highlight in bold the best performance across the different configurations.}
    
    \begin{tabularx}{0.65\linewidth}{lcccc}
    \toprule
        & \multicolumn{4}{c}{Gaussian - 20}\\
         \cmidrule(lr){2-5}\\
        Algorithm & $\gamma=0$ & $\gamma=0.9$ & $\gamma=0.99$ & $\gamma=1$\\
        \midrule
         \multirow{1}{*}{VPG} & $.577 \pm .017$ & $.616 \pm .017$ & $.634 \pm .016$ & $.618 \pm .017$ \\
         \midrule
        \multirow{1}{*}{PPO} & $.570 \pm .017$ & \bm{$.641 \pm .016$} & $.627 \pm .016$ & $.622 \pm .017$ \\
        \bottomrule
    \end{tabularx}
    \label{tab:ppo}
    \end{center}
\end{table*}
\begin{figure}[ht]
\centering
\includegraphics[width=0.47\textwidth]{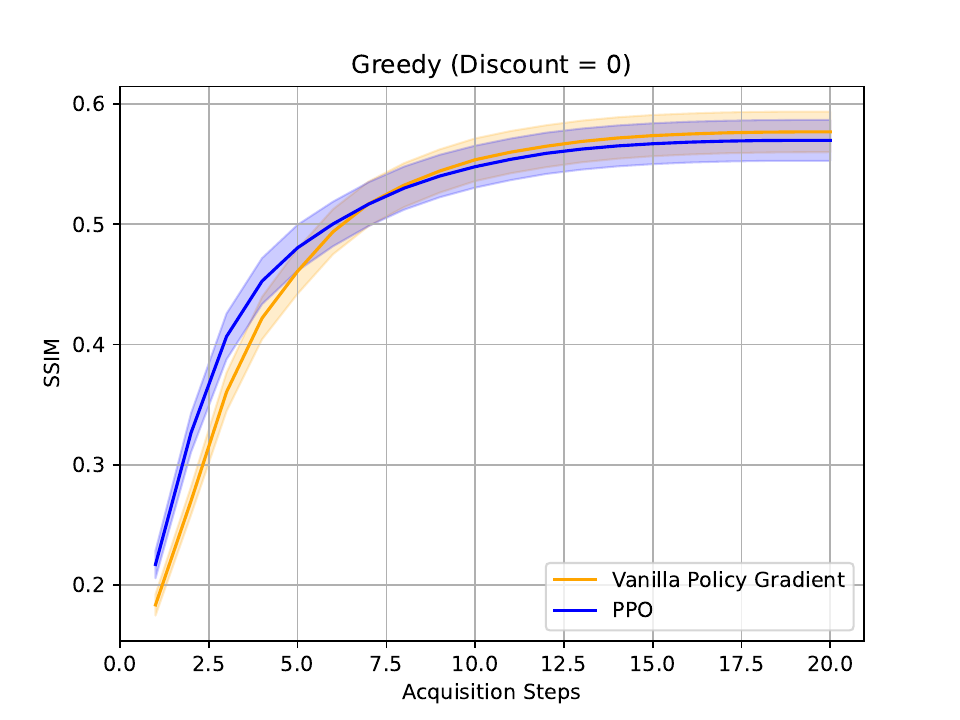} 
\includegraphics[width=0.47\textwidth]{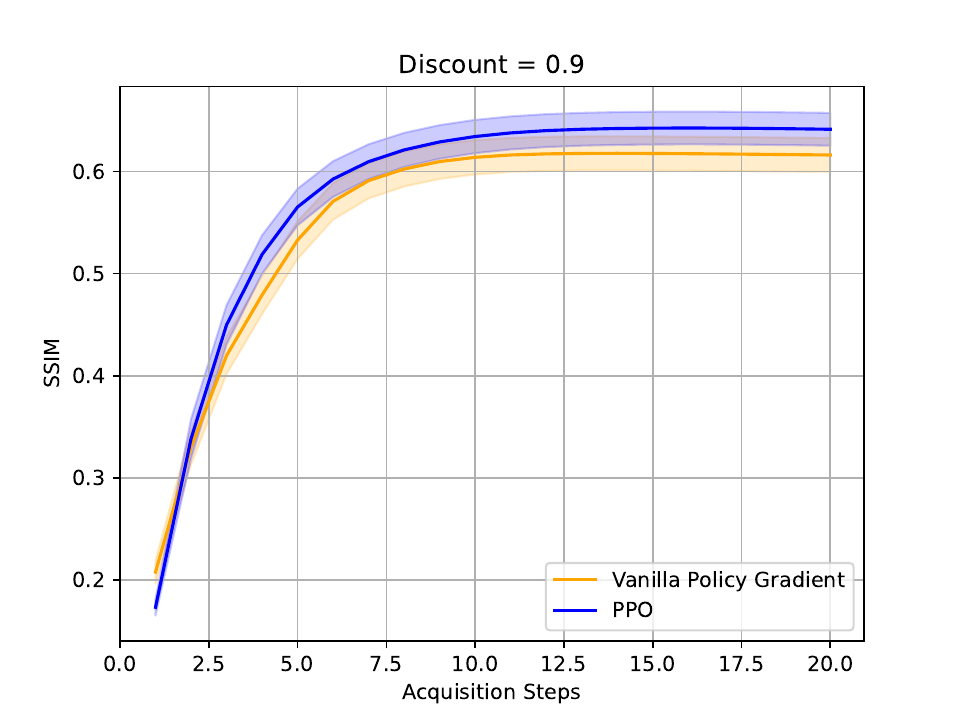} \\
\includegraphics[width=0.47\textwidth]{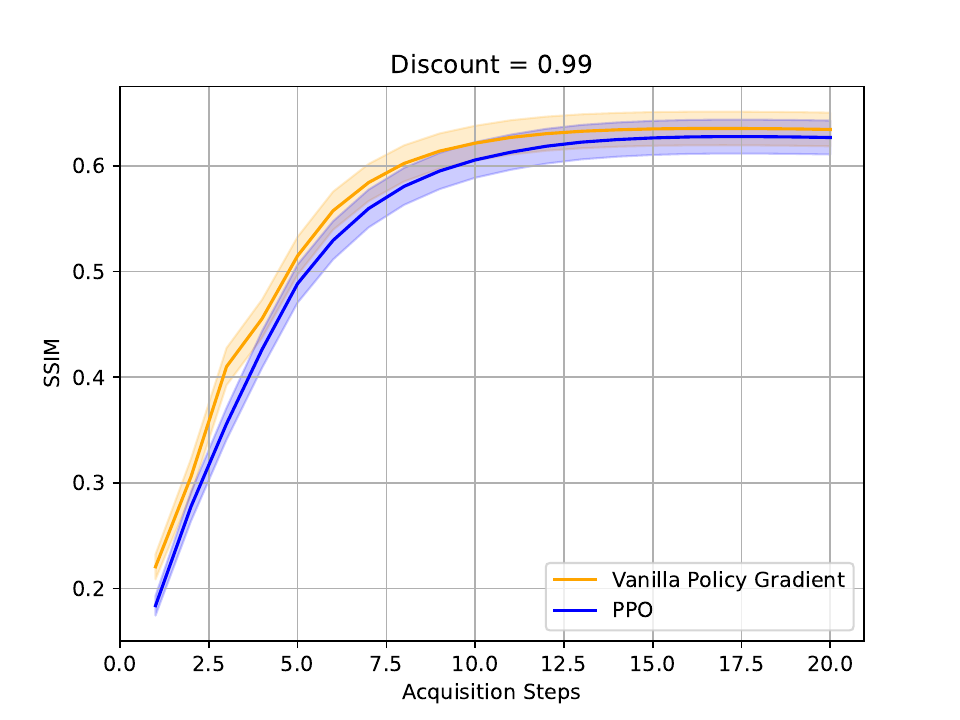} 
\includegraphics[width=0.47\textwidth]{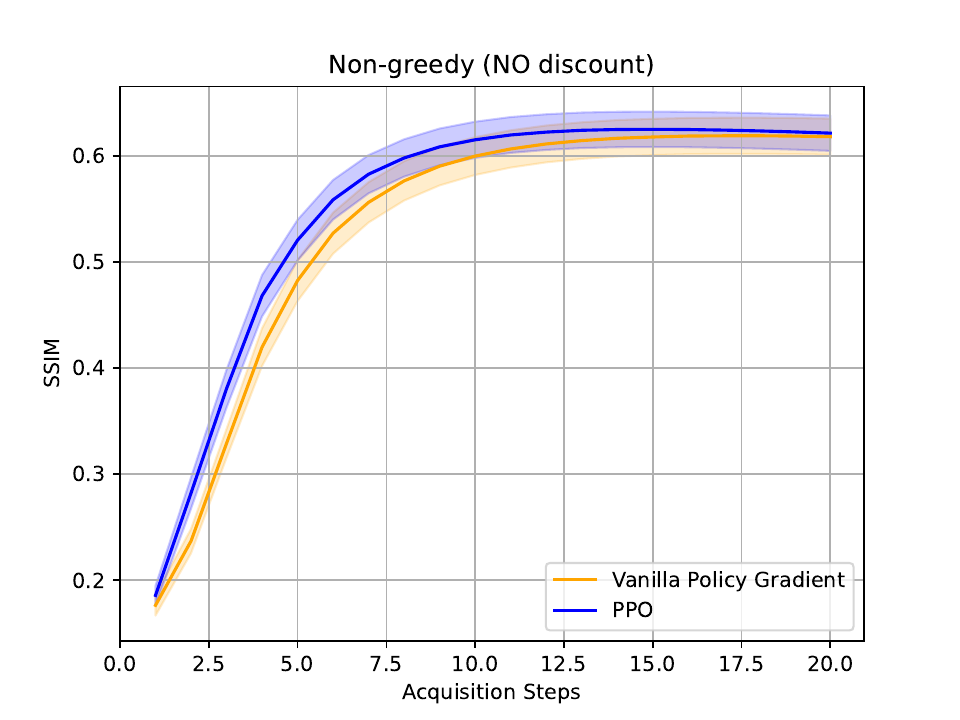}
\caption{Mean and standard error of the mean in SSIM at each acquisition step, for models trained on MNIST with Gaussian measurements and 20 step trajectories. The results are obtained on the test set. We compare VPG (yellow) and PPO (blue) performance for different discount factors $\gamma$ reported on top of each graph.}
\label{fig:vpg_ppo}
\end{figure}
\clearpage
\subsection{Comparison with ISTA}
As an additional baseline, we compare the performance of our approach (AE-E2E and AE-R) with the well-established compressed sensing method Iterative Soft-Tresholding Algorithm (ISTA) \citep{daubechies2004iterative} on the MNIST dataset. Both AE-E2E and AE-R are trained on a trajectory length of 784 measurements. The results are reported in figure \ref{fig:ista}.
\begin{figure}[!ht]
\centering
\includegraphics[width=0.47\textwidth]{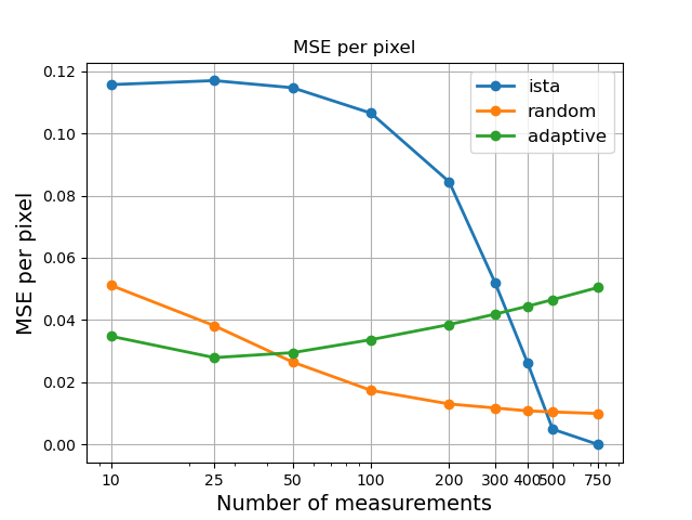}
\caption{Performance of ISTA (blue), AE-R (orange) and AE-E2E (green) in average mean square error per pixel over the test dataset.}
\label{fig:ista}
\end{figure}
\subsection{Additional figures}
In this section, we show additional plots for the experiments in sections \ref{sec:exp_ae} and \ref{sec:experiments_vae}. Figure \ref{fig:mnist_20_app} reports the results at each step of the acquisition trajectory for models trained on 20 and 50 steps for Gaussian measurements, and 5 and 10 steps for Radon, for the AE-R, AE-P, and AE-E2E models. The same is reported in Figure \ref{fig:beta_vae_gauss_app} but for the variational encoder-decoder models.
\label{app:more_figures}
\begin{figure}[ht]
\centering
\includegraphics[width=0.45\textwidth]{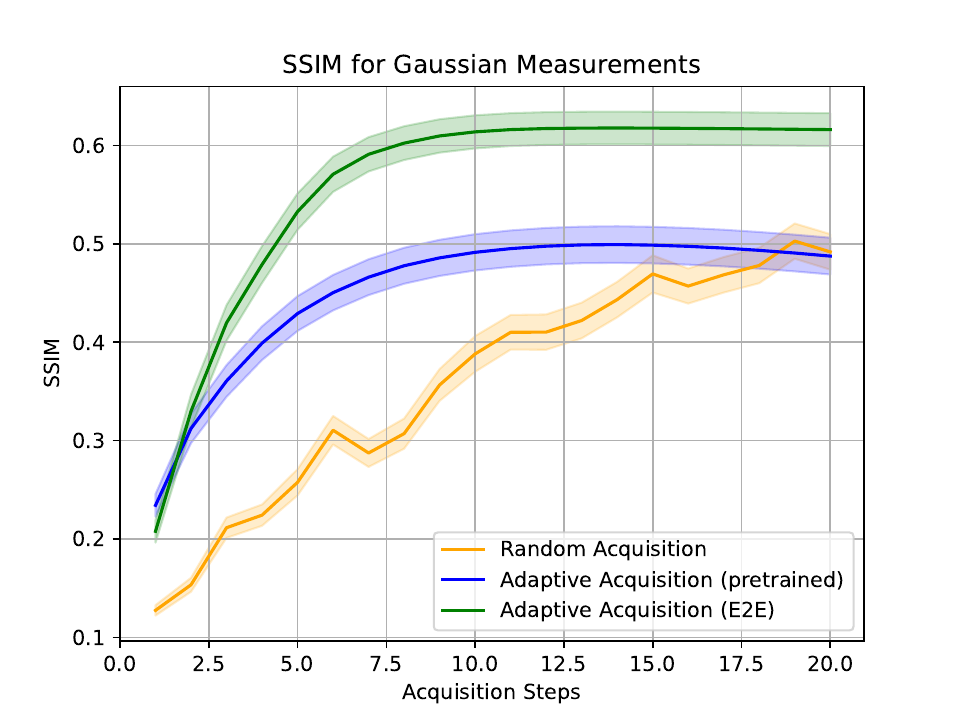}
\includegraphics[width=0.45\textwidth]{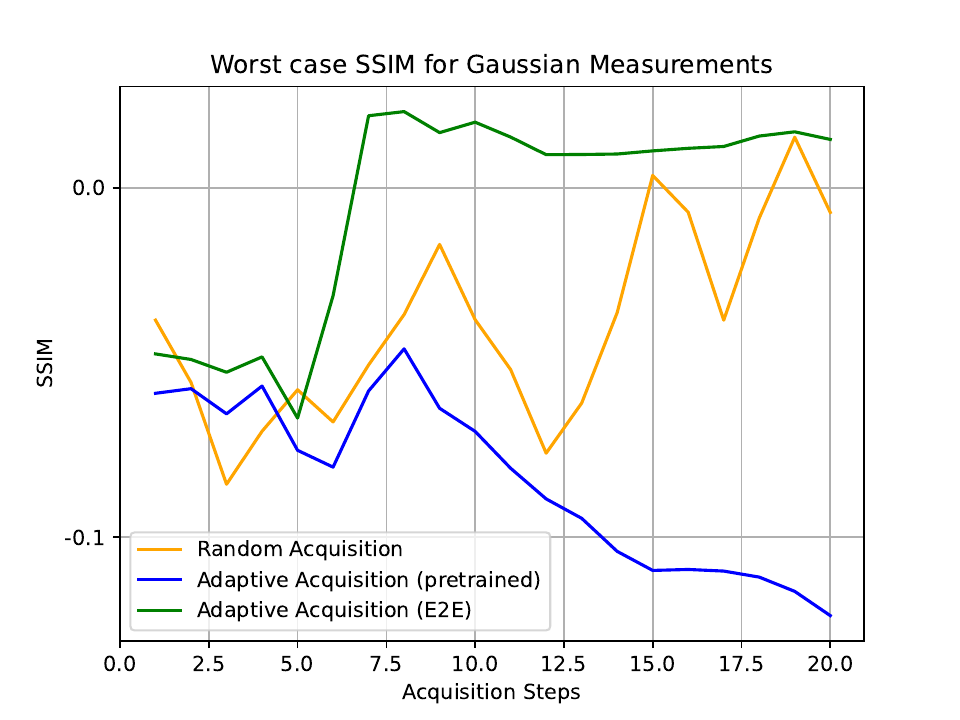}\\
\includegraphics[width=0.45\textwidth]{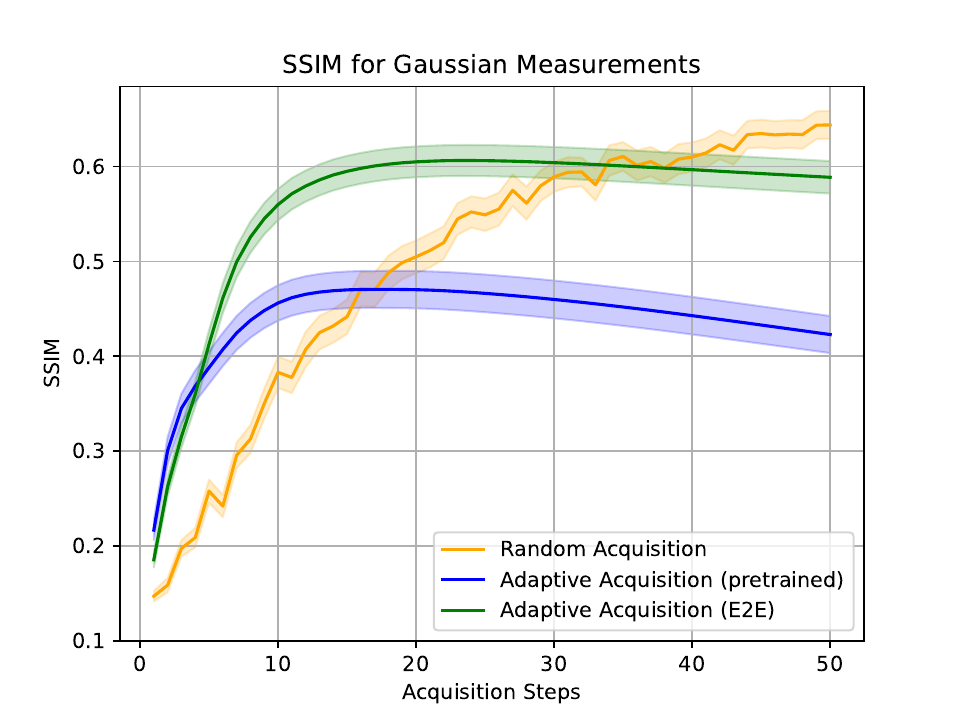}
\includegraphics[width=0.45\textwidth]{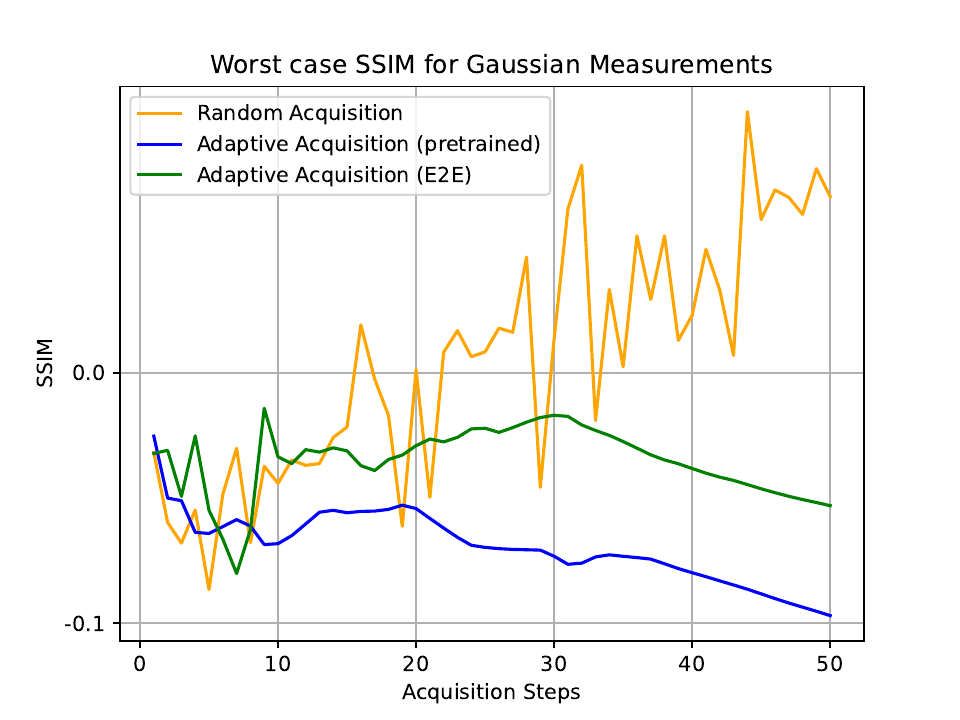}\\
\includegraphics[width=0.45\textwidth]{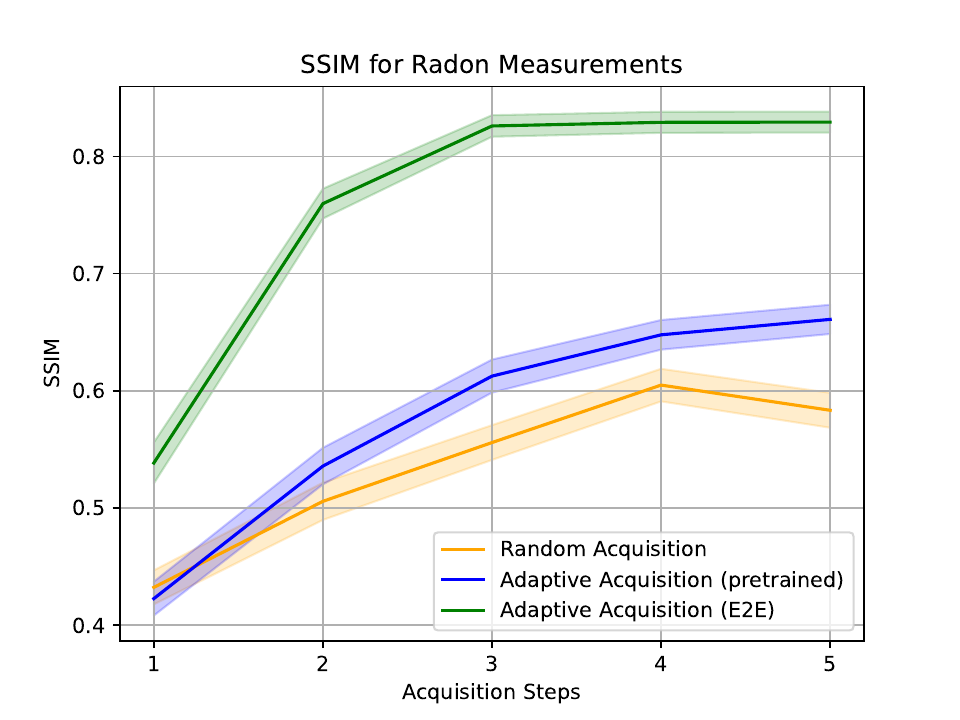}
\includegraphics[width=0.45\textwidth]{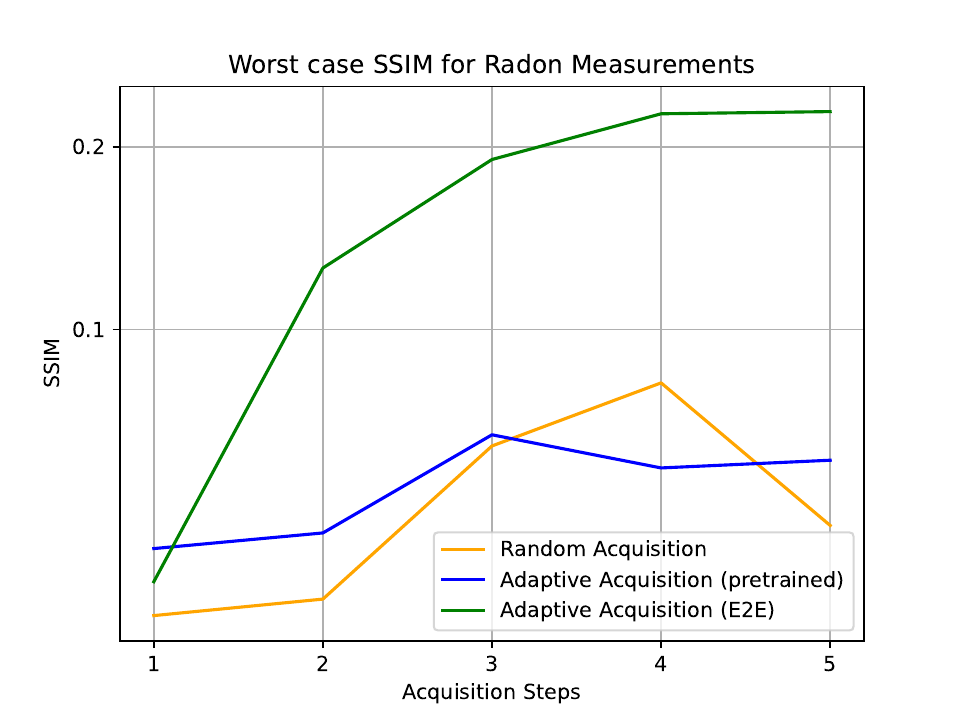}\\
\includegraphics[width=0.45\textwidth]{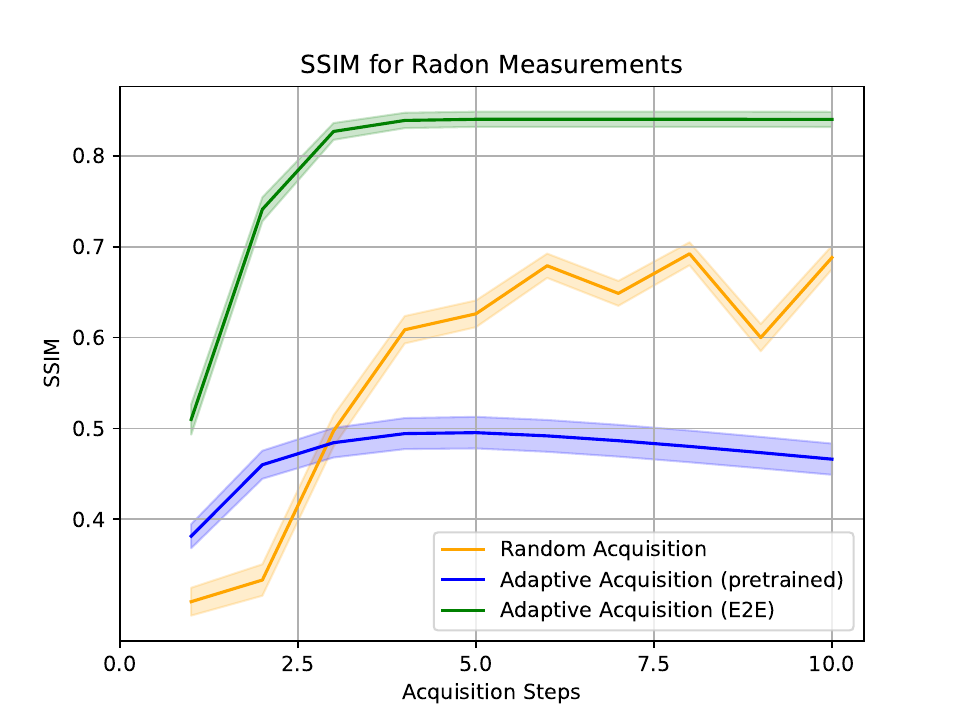}
\includegraphics[width=0.45\textwidth]{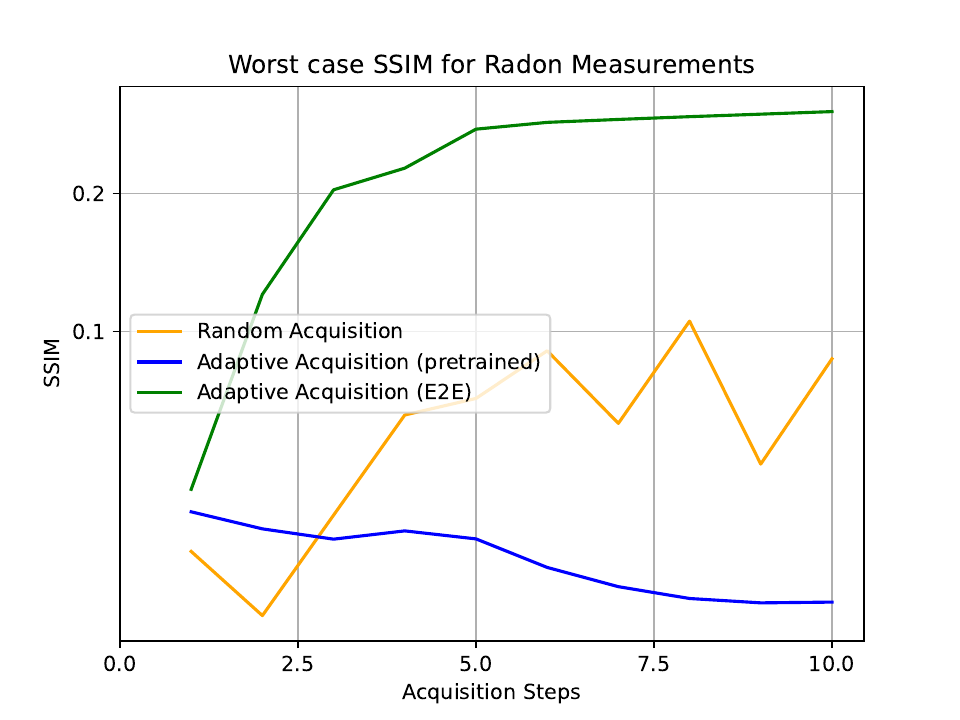}
\caption{Results on the MNIST test dataset with Gaussian and Radon measurements. We report the mean and standard error of the mean in SSIM (Left) and worst case error in SSIM (Right) for AE-R (yellow), AE-P (blue), and AE-E2E (green) for each acquisition step in the trajectory. Each model is trained on different trajectory lengths: 20 and 50 for Gaussian and 5 and 10 for Radon.}
\label{fig:mnist_20_app}
\end{figure}

\begin{figure}[ht]
\centering
\includegraphics[width=0.47\textwidth]{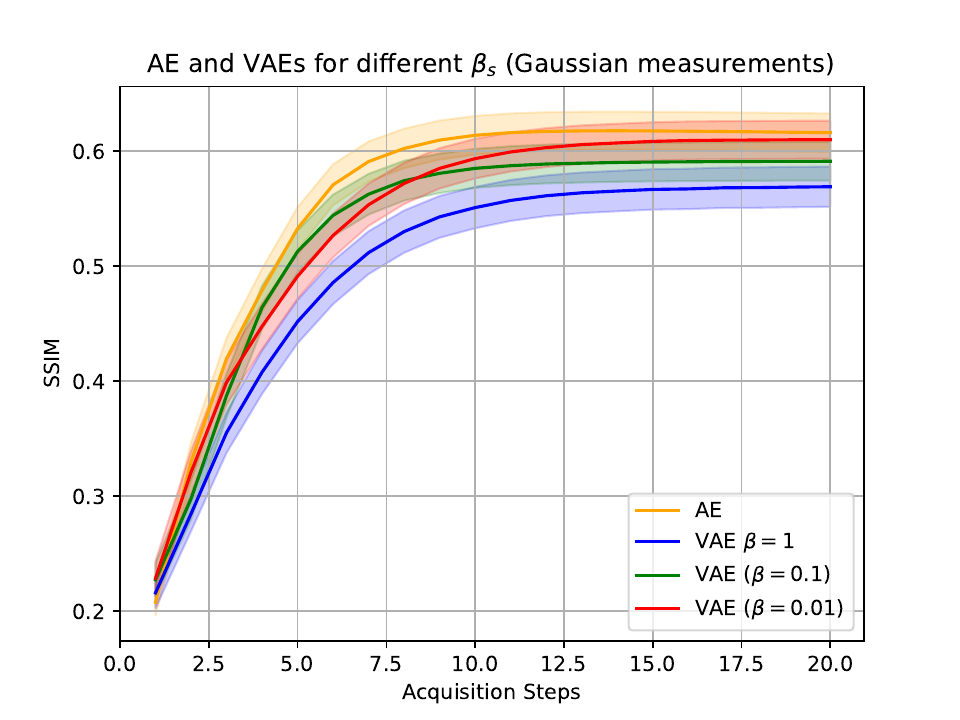}
\includegraphics[width=0.47\textwidth]{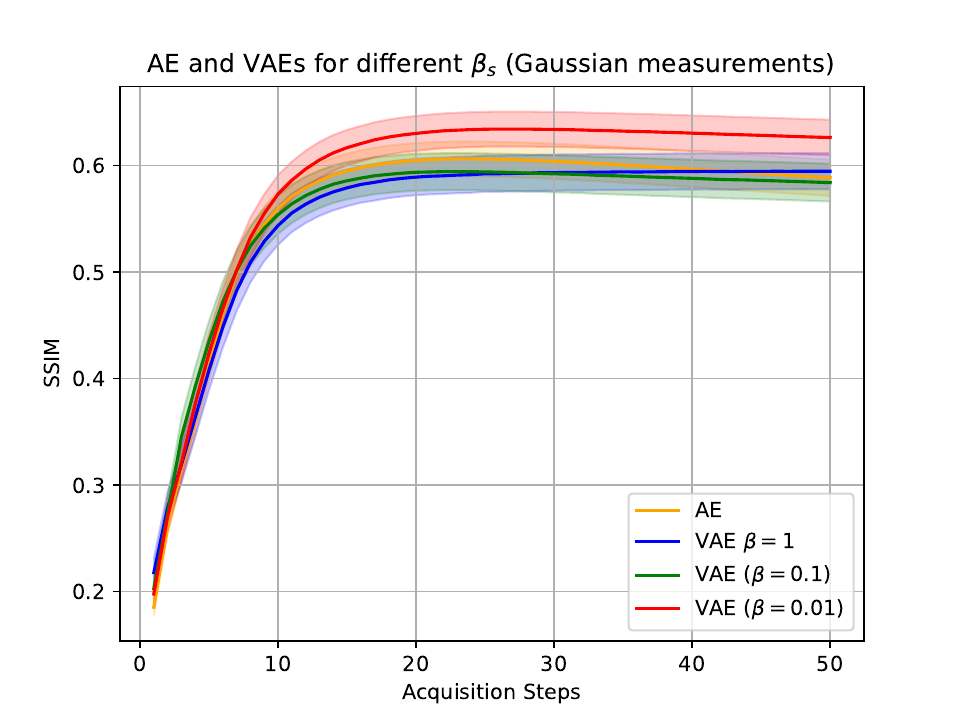}\\
\includegraphics[width=0.47\textwidth]{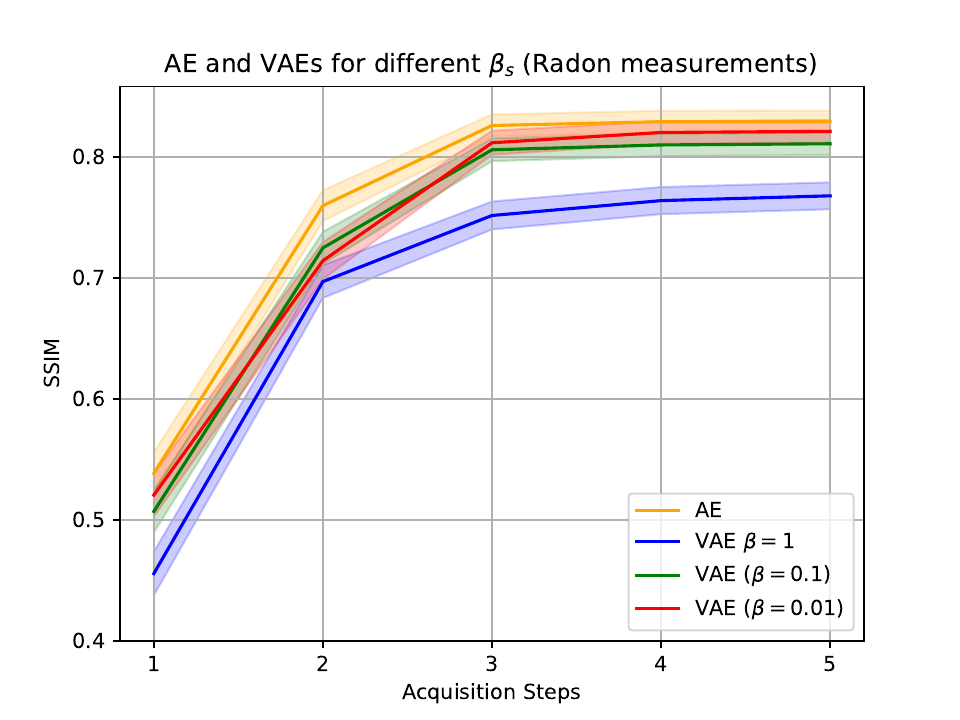}
\includegraphics[width=0.47\textwidth]{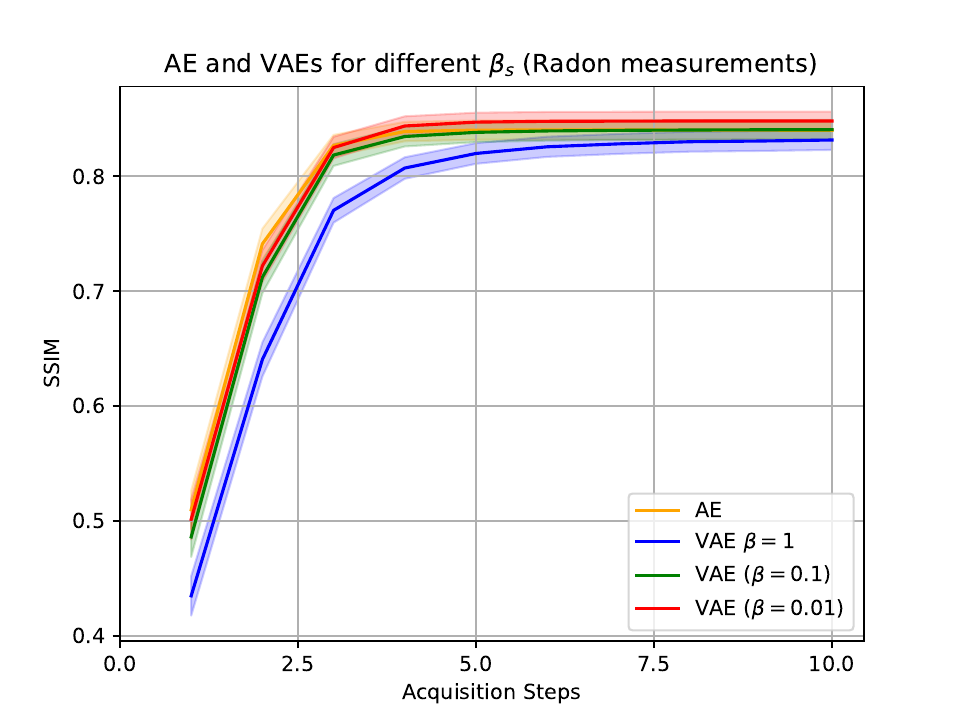}
\caption{Comparison of AE-E2E (yellow) and VAE-E2E for different $\beta$ ($\beta=1\rightarrow$ blue, $\beta=0.1\rightarrow$ green, $\beta=0.01\rightarrow$ red). We show the mean and standard error of the mean in SSIM for the MNIST test set at different stages of the acquisition trajectory. We test on models trained with Gaussian measurements on 20 and 50 acquisition horizons and 5 and 10 for Radon.}
\label{fig:beta_vae_gauss_app}
\end{figure}

\clearpage

\subsection{Ablation Studies}\label{sec:abl_stuides}
The purpose of this section is to report results from ablation studies. We report results concerning the MNIST and MAYO datasets. According to the main paper, also in this section, we consider Gaussian and Radon measurements. As we already mentioned in~\autoref{sec:datasets}, we base our choice of Gaussian and Radon type of measurements on the following observation. In compressed sensing, Gaussian measurements can achieve theoretical limits for non-adaptive sensing and therefore represent the best non-adaptive sensing scheme. Instead, the Radon transform represents a common choice to reconstruct images from CT scans (see \citet{beatty2012radon} for a detailed description).
In~\autoref{tab:random_vs_e2e_mnist_abl} to~\autoref{tab:beta_vae_mayo_abl} we report results concerning models trained using the final reward only, i.e., we consider only the reconstruction error at the final state (after $T$ measurements). We rerun the simulations on the MNIST dataset multiple times. It is possible to notice how in~\autoref{tab:random_vs_e2e_mnist_abl} and~\autoref{tab:beat_vae_mnist_abl} the results are significantly worse than the cases in which reward is given at each time step, indicating that the increased sparsity in the reward distribution results to be more challenging for the policy, especially as the acquisition horizon increases. A different result can be seen in~\autoref{tab:random_vs_e2e_mayo_abl} and ~\autoref{tab:beta_vae_mayo_abl}, where the performances seem to improve with respect to the results in~\autoref{tab:mayo}, even though not always with increased number of acquisition. From these results, we can conclude that the two different kinds of rewards may be suited for different models and datasets and that both should be tested.

\begin{table*}[!ht]
    \begin{center}
    \caption{Results on the MNIST dataset for both Gaussian and Radon measurements in SSIM (higer is
    better). The trajectory length of the experiment is reported on the second row. For each model, we
    report mean and standard error of the mean on the row signed as M, while we report the worst
    case error on the row signed as W. The reward for the RL policy is computed based on the reconstruction error at the final state only.}
    \small
    \begin{tabularx}{0.87\linewidth}{llcccccc}
    \toprule
        & & \multicolumn{3}{c}{Gaussian} & \multicolumn{3}{c}{Radon} \\
         \cmidrule(lr){3-5} \cmidrule(lr){6-8}\\
        Models & & 20 & 50 & 100 & 5 & 10 & 20\\
        \midrule
         \multirow{2}{*}{AE-R} & \multirow{1}{*}{M}  & $.505 \pm .018$ & $.641 \pm .014$ & $.726 \pm .011$ & $.658 \pm .014$ & $.664 \pm .014$ & $.757 \pm .011$ \\
         & \multirow{1}{*}{W}   & $-.028$ & $.064$ & $.132$ & $.074$ & $-.036$ & $.100$ \\
         \midrule
        \multirow{2}{*}{AE-P} & \multirow{1}{*}{M}  & $.325 \pm .021$ & $.217 \pm .018$ & $.232 \pm .016$ & $.680 \pm .013$ & $.456 \pm .019$ & $.365 \pm .017$ \\
         & \multirow{1}{*}{W}   & $-.117$ & $-.153$ & $-.119$ & $.079$ & $-.120$ & $-.111$ \\
         \midrule
        \multirow{2}{*}{AE-E2E} & \multirow{1}{*}{M}  & $.475 \pm .021$ & $.296 \pm .021$ & $.283 \pm .021$ & $.824 \pm .009$ & $.781 \pm .014$ & $.739 \pm .015$ \\
         & \multirow{1}{*}{W}   & $-.078$ & $-.162$ & $-.144$ & $.205$ & $.057$ & $.010$ \\
         \bottomrule
    \end{tabularx}
    \label{tab:random_vs_e2e_mnist_abl}
    \end{center}
\end{table*}
\begin{table*}[!ht]
    \begin{center}
    \caption{Results on the MNIST dataset for both Gaussian and Radon measurements in SSIM (higer is
    better). The trajectory length of the experiment is reported on the second row. For each model, we
    report mean and standard error of the mean. The reward for the RL policy is computed based on the reconstruction error at the final state only.}
    
    \small
    \begin{tabularx}{.89\linewidth}{llcccccccc}
    \toprule
        & & \multicolumn{3}{c}{Gaussian} & \multicolumn{3}{c}{Radon} \\
         \cmidrule(lr){3-5} \cmidrule(lr){6-8}\\
        Models & $\beta$ & 20 & 50 & 100 & 5 & 10 & 20\\
        \midrule
         \multirow{4}{*}{VAE-R} 
         & $1$ & $.441 \pm .018$ & $.627 \pm .014$ & $.703 \pm .012$ & $.566 \pm .016$ & $.639 \pm .015$ & $.733 \pm .012$ \\
         & $.1$ & $.467 \pm .017$ & $.629 \pm .015$ & $.701 \pm .012$ & $.581 \pm .015$ & $.678 \pm .012$ & $.743 \pm .011$ \\
         & $.01$ & $.482 \pm .016$ & $.635 \pm .014$ & $.696 \pm .012$ & $.595 \pm .014$ & $.714 \pm .012$ & $.761 \pm .010$ \\
         & $.001$ & $.479 \pm .016$ & $.635 \pm .014$ & $.706 \pm .012$ & $.612 \pm .014$ & $.675 \pm .013$ & $.764 \pm .010$ \\
         \midrule
         
        \multirow{3}{*}{VAE-P} 
        & $1$ & $.335 \pm .019$ & $.297 \pm .017$ & $.252 \pm .017$ & $.585 \pm .014$ & $.662 \pm .015$ & $.586 \pm .017$ \\
        & $.1$ & $.348 \pm .019$ & $.196 \pm .015$ & $.216 \pm .018$ & $.642 \pm .015$ & $.662 \pm .014$ & $.535 \pm .016$ \\
        & $.01$ & $.321 \pm .018$ & $.251 \pm .020$ & $.201 \pm .018$ & $.655 \pm .013$ & $.433 \pm .016$ & $.493 \pm .017$ \\
        & $.001$ & $.323 \pm .018$ & $.203 \pm .016$ & $.235 \pm .018$ & $.613 \pm .013$ & $.520 \pm .016$ & $.378 \pm .018$ \\
        \midrule
         
        \multirow{3}{*}{VAE-E2E} 
        & $1$ & $.306 \pm .021$ & $.301 \pm .021$ & $.259 \pm .019$ & $.646 \pm .015$ & $.621 \pm .016$ & $.656 \pm .017$ \\
        & $.1$ & $.307 \pm .020$ & $.256 \pm .019$ & $.207 \pm .015$ & $.714 \pm .014$ & $.676 \pm .015$ & $.490 \pm .022$ \\
        & $.01$ & $.293 \pm .019$ & $.231 \pm .017$ & $.205 \pm .014$ & $.722 \pm .014$ & $.654 \pm .017$ & $.573 \pm .026$ \\
        & $.001$ & $.295 \pm .019$ & $.205 \pm .015$ & $.215 \pm .015$ & $.771 \pm .012$ & $.737 \pm .015$ & $.637 \pm .021$ \\
    \bottomrule
    \end{tabularx}
    \label{tab:beat_vae_mnist_abl}
    \end{center}
\vspace{-5mm}
\end{table*}
\begin{table*}[!ht]
    \begin{center}
    \caption{Results on the MAYO dataset for Radon measurements. The trajectory length of the experiment is reported on the second row. We report mean and standard error of the mean in SSIM. For each model, we report mean and standard error of the mean on the row signed as M, while we report the worst case error on the row signed as W. The reward for the RL policy is computed based on the reconstruction error at the final state only.}
    
    \begin{tabularx}{0.56\linewidth}{llccc}
    \toprule
        & & \multicolumn{3}{c}{Radon} \\
         \cmidrule(lr){3-5}\\
        Models & & 5 & 10 & 20\\
        \midrule
         \multirow{2}{*}{AE-R} 
         & M & $.629 \pm .014$ & $\bf .646 \pm .014$ & $.532 \pm .011$ \\
         & W & $.339$ & $.349$ & $.286$ \\
         \midrule
        \multirow{2}{*}{AE-E2E} 
         & M & $\bf .657 \pm .016$ & $\bf .660 \pm .015$ & $\bf .658 \pm .016$ \\
         & W & $.242$ & $.302$ & $.264$\\
    \bottomrule
    \end{tabularx}
    \label{tab:random_vs_e2e_mayo_abl}
    \end{center}
\vspace{-5mm}
\end{table*}
\begin{table*}[!ht]
    \begin{center}
    \caption{Results on the MAYO dataset for Radon measurements. The trajectory length of the experiment is reported on the second row. We report mean and standard error of the mean in SSIM. The reward for the RL policy is computed based on the reconstruction error at the final state only.}
    
    \begin{tabularx}{0.59\linewidth}{llccc}
    \toprule
        & & \multicolumn{3}{c}{Radon} \\
         \cmidrule(lr){3-5}\\
        Models & $\beta$ & 5 & 10 & 20\\
        \midrule
         \multirow{4}{*}{VAE-R} 
         & $1$ & $.571 \pm .012$ & $.612 \pm .013$ & $.604 \pm .013$ \\
         & $.1$ & $.574 \pm .013$ & $.618 \pm .013$ & $.620 \pm .013$ \\
         & $.01$ & $.599 \pm .014$ & $.614 \pm .013$ & $.593 \pm .012$ \\
         & $.001$ & $.556 \pm .011$ & $.622 \pm .013$ & $.629 \pm .013$ \\
         \midrule
         
        \multirow{3}{*}{VAE-E2E} 
        & $1$ & $.643 \pm .015$ & $.652 \pm .014$ & $\bf .668 \pm .014$ \\
        & $.1$ & $.661 \pm .014$ & $\bf .664 \pm .014$ & $\bf .659 \pm .015$ \\ 
        & $.01$ & $\bf .666 \pm .014$ & $\bf .661 \pm .014$ & $\bf .661 \pm .015$ \\ 
        & $.001$ & $\bf .664 \pm .014$ & $\bf .662 \pm .014$ & $\bf .671 \pm .014$ \\ 
    \bottomrule
    \end{tabularx}
    \label{tab:beta_vae_mayo_abl}
    \end{center}
\vspace{-5mm}
\end{table*}


\end{document}